\title[Solving Combinatorial Games using Products, Projections and Lexicographically Optimal Bases]{Solving Combinatorial Games using\\ Products, Projections and Lexicographically Optimal Bases}
\begin{document}

\maketitle

\begin{abstract} In order to find Nash-equilibria for two-player zero-sum games where each player plays combinatorial objects like spanning trees, matchings etc, we consider two online learning algorithms: the online mirror descent (OMD) algorithm and the multiplicative weights update (MWU) algorithm. The OMD algorithm requires the computation of a certain Bregman projection, that has closed form solutions for simple convex sets like the Euclidean ball or the simplex. However, for general polyhedra one often needs to exploit the general machinery of convex optimization. We give a novel primal-style algorithm for computing Bregman projections on the base polytopes of polymatroids. Next, in the case of the MWU algorithm, although it scales logarithmically in the number of pure strategies or experts $N$ in terms of regret, the algorithm takes time polynomial in $N$; this especially becomes a problem when learning combinatorial objects. We give a general recipe to simulate the multiplicative weights update algorithm in time polynomial in their natural dimension. This is useful whenever there exists a polynomial time generalized counting oracle (even if approximate) over these objects. Finally, using the combinatorial structure of symmetric Nash-equilibria (SNE) when both players play bases of matroids, we show that these can be found with a single projection or convex minimization (without using online learning). 
\end{abstract}

\begin{keywords}
multiplicative weights update, generalized approximate counting oracles, online mirror descent, Bregman projections, submodular functions, lexicographically optimal bases, combinatorial games
\end{keywords}

\maketitle

\section{Introduction}\label{intro}
The motivation of our work comes from two-player zero-sum games where both players play combinatorial objects, such as spanning trees, cuts, matchings, or paths in a given graph. The number of pure strategies of both players can then be exponential in a natural description of the problem. These are {\it succinct} games, as discussed in the paper of  \cite{papadimitriou2008computing} on correlated equilibria. For example, in a spanning tree game in which all the results of this paper apply, pure strategies correspond to spanning trees $T_1$ and $T_2$ selected by the two players in a graph $G$ (or two distinct graphs $G_1$ and $G_2$) and the payoff $\sum_{e\in T_1, f\in T_2} L_{ef}$ is a bilinear function; this allows for example to model classic network interdiction games (see for e.g., \cite{Washburn1995}), design problems (\cite{Chakrabarty2006}), and the interaction between algorithms for many problems such as ranking and compression as \emph{bilinear duels} (\cite{Immorlica2011}). To formalize the games we are considering, assume that the pure strategies for player 1 (resp.~player 2) correspond to the vertices $u$ (resp.~$v$) of a {\it strategy} polytope $P \subseteq \mathbb{R}^m$ (resp.~$Q\subseteq \mathbb{R}^n$) and that the loss for player 1 is given by the bilinear function $u^TLv$ where $L\in \mathbb{R}^{m\times n}$. A feature of bilinear loss functions is that the bilinearity extends to mixed strategies as well, and thus one can easily see that mixed Nash equilibria (see discussion in Section \ref{prelim}) correspond to solving the min-max problem: 
\begin{equation} \label{eqNE}
\min_{x\in P} \max_{y\in Q} x^TLy=\max_{y\in Q}\min_{x\in P} x^TLy.
\end{equation}
We refer to such games as MSP (Min-max Strategy Polytope) games. 

Nash equilibria for two-player zero-sum games can be characterized and found by solving a linear program (\cite{Neumann1928}). However, for succinct games in which the strategies of both players are exponential in a natural description of the game, the corresponding linear program has exponentially many variables and constraints, and as  \cite{papadimitriou2008computing} point out in their open questions section, ``there are no standard techniques for linear programs that have both dimensions exponential.'' Under bilinear losses/payoffs however, the von Neumann linear program can be reformulated in terms of the strategy polytopes $P$ and $Q$, and this reformulation can be solved using the equivalence between optimization and separation and the ellipsoid algorithm (\cite{Grotschel1981}) (discussed in more detail in Section \ref{prelim}). In the case of the spanning tree game mentioned above, the strategy polytope of each player is simply the spanning tree polytope characterized by \cite{Edmonds1971}. Note that \cite{Immorlica2011} give such a reformulation for bilinear games involving strategy polytopes with compact formulations only (i.e., each strategy polytope must be described using a polynomial number of inequalities).

In this paper, we first explore ways of solving efficiently this linear program using learning algorithms. As is well-known, if one of the players uses a no-regret learning algorithm and adapts his/her strategies according to the losses incurred so far (with respect to the most adversarial opponent strategy) then the average of the strategies played by the players in the process constitutes an approximate equilibrium (\cite{Cesa-Bianchi2006Book}). Therefore, we consider the following learning problem over $T$ rounds in the presence of an adversary: in each round the ``learner'' (or player) chooses a mixed strategy $x_t \in P$. Simultaneously, the adversary chooses a loss vector $l_t = Lv_t$ where $v_t \in Q$ and the loss incurred by the player is $x_t^T l_t$. The goal of the player is to minimize the cumulative loss, i.e., $\sum_{i=1}^t x_t^T l_t$. Note that this setting is similar to the classical full-information online structured learning problem (see for e.g., \cite{Audibert2013}), where the learner is required to play a pure strategy $u_t \in \mathcal{U}$ (where $\mathcal{U}$ is the vertex set of $P$) possibly randomized according to a mixed strategy $x_t \in P$, and aims to minimize the loss in expectation, i.e., $\mathbb{E}(u_t^Tl_t)$. 

Our two main results on learning algorithms are (i) an efficient implementation of online mirror descent when $P$ is the base polytope of a polymatroid and this is obtained by designing a novel algorithm to compute Bregman projections over such a polytope, and (ii) an efficient implementation of the MWU algorithm over the vertices of 0/1 polytopes $P$, provided we have access to a generalized counting oracle for the vertices.  These are discussed in detail below. In both cases, we assume that we have an (approximate) linear optimization oracle for $Q$, which allows to compute the (approximately) worst loss vector given a mixed strategy in $P$. Finally, we study the combinatorial properties of symmetric Nash-equilibria for matroid games and show how this structure can be exploited to compute these equilibria. 

\subsection{Online Mirror Descent} 
Even though the online mirror descent algorithm is near-optimal in terms of regret for most of online learning problems (\cite{Srebro2011}), it is not computationally efficient. One of the crucial steps in the mirror descent algorithm is that of taking Bregman projections on the strategy polytope that has closed form solutions for simple cases like the Euclidean ball or the $n$-dimensional simplex, and this is why such polytopes have been the focus of attention. For general polyhedra (or convex sets) however, taking a Bregman projection is a separable convex minimization problem. One could exploit the general machinery of convex optimization such as the ellipsoid algorithm, but the question is if we can do better. 

\paragraph{\textbf{First contribution:}} We give a novel primal-style algorithm, {\sc Inc-Fix} for minimizing separable strongly convex functions over the base polytope $P$ of a polymatroid. This includes the setting in which ${\mathcal U}$ forms the bases of a matroid, and cover many interesting examples like k-sets (uniform matroid), spanning trees (graphic matroid), matchable/reachable sets in graphs (matching matroid/gammoid), etc. The algorithm is iterative and maintains a feasible point in the polymatroid (or the independent set polytope for a matroid). This point follows a trajectory that is guided by two constraints: (i) the point must remain in the polymatroid, (ii) the smallest indices (not constrained by (i)) of the gradient of the Bregman divergence increase uniformly. The correctness of the algorithm follows from first order optimality conditions, and the optimality of the greedy algorithm when optimizing linear functions  over polymatroids. We discuss special cases of our algorithm under two classical mirror maps, the unnormalized entropy and the Euclidean mirror map, under which each iteration of our algorithm reduces to staying feasible while moving along  a straight line.  We believe that the key ideas developed in this algorithm may be useful to compute projections under Bregman divergences over other polyhedra, as long as the linear optimization for those is well-understood. 

As a remark, in order to compute $\epsilon$-approximate Nash-equilibria, if both the strategy polytopes are polymatroids then the same projection algorithms apply to the saddle-point mirror prox algorithm (\cite{Nemirovski2004Prox}) and reduce the dependence of the rate of convergence on $\epsilon$ to $O(1/\epsilon)$. 

\subsection{Multiplicative Weights Update}
The \emph{multiplicative weights update} (MWU) algorithm proceeds by maintaining a probability distribution over all the pure strategies of each player, and multiplicatively updates the probability distribution in response to the adversarial strategy. The number of iterations the MWU algorithm takes to converge to an $\epsilon-$approximate strategy\footnote{A strategy pair $(x^*, y^*)$ is called an $\epsilon$-approximate Nash-equilibrium if $x^{*T}Ly - \epsilon \leq x^{*T}Ly^* \leq x^TLy^* +\epsilon$ for all $x \in P, y \in Q$.} is $O(\ln N/\epsilon^2)$, where $N$ is the number of pure strategies of the learner, in our case the size of the vertex set of the combinatorial polytope. However, the running time of each iteration is $O(N)$ due to the updates required on the probabilities of each pure strategy. A natural question is, \emph{can the MWU algorithm be adapted to run in logarithmic time in the number of strategies?} 

\paragraph{\textbf{Second contribution:}} We give a general framework for simulating the MWU algorithm over the set of vertices ${\mathcal U}$ of a 0/1 polytope $P$ efficiently 
by updating product distributions. A product distribution $p$ over $\mathcal{U} \subseteq \{0,1\}^m$ is such that $p(u) \propto \prod_{e: u(e)=1} \lambda(e)$ for some multiplier vector $\lambda \in \mathbb{R}^m_+$. Note that it is easy to start with a uniform distribution over all vertices in this representation, by simply setting $\lambda(e)=1$ for all $e$. 
 The key idea is that a \emph{multiplicative weight update to a product distribution results in another product distribution}, obtained by appropriately and multiplicatively updating each $\lambda(e)$. Thus, in different rounds of the MWU algorithm, we move from one product distribution to another. This implies that we can restrict our attention to product distributions without loss of generality, and this was already known as any point in a 0/1 polytope can be decomposed into a product distribution. Product distributions allow us to maintain a distribution on (the exponentially sized) ${\mathcal U}$ by simply maintaining $\lambda \in \mathbb{R}^m_+$.
To be able to use the MWU algorithm together with product distributions, we require access to a {\it generalized (approximate) counting oracle} which, given $\lambda\in \mathbb{R}^m_+$, (approximately) computes $\sum_{u\in {\mathcal U}}\prod_{e: u_e=1} \lambda(e)$ and also, for any element $f$, computes $\sum_{u\in {\mathcal U}: u_f=1}\prod_{e: u_e=1} \lambda(e)$ allowing the derivation of the corresponding marginals $x \in P$. For self-reducible structures ${\mathcal U}$ (\cite{Schnorr76}) (such as spanning trees, matchings or Hamiltonian cycles), the latter condition for every element $f$ is superfluous, and the generalized approximate counting oracle can be replaced by a fully polynomial approximate generator as shown by \cite{JerrumVV86}. 

Whenever we have access to a generalized approximate counting oracle, the MWU algorithm converges to $\epsilon$-approximate in $O(\ln |\mathcal{U}|/\epsilon^2)$ time. A generalized exact counting oracle is available for spanning trees (this is Kirchhoff's determinantal formula or matrix tree theorem) or more generally for bases of regular matroids (see for e.g., \cite{Welsh08}) and randomized approximate ones for bipartite matchings (\cite{Jerrum2004}) and extensions such as $0-1$ circulations in directed graphs or subgraphs with prespecified degree sequences (\cite{Jerrum2004}).
 

As a remark, if a generalized approximate counting oracle  exists for both strategy polytopes (as is the case for the spanning tree game mentioned early in the introduction), the same ideas apply to the optimistic mirror descent algorithm (\cite{Rakhlin2013}) and reduce the dependence of the rate of convergence on $\epsilon$ to $O(1/\epsilon)$ while maintaining polynomial running time.

\subsection{Structure of symmetric Nash equilibria}
For matroid MSP games, we study properties of Nash equilibria using combinatorial arguments with the hope to find constructive algorithms to compute Nash equilibria. Although we were not able to solve the problem for the general case, we prove the following results for the special case of symmetric Nash equilibria (SNE) where both the players play the same mixed strategy. 

\paragraph{\textbf{Third contribution:}} We combinatorially characterize the structure of symmetric Nash equilibria. We prove uniqueness of SNE under positive and negative definite loss matrices. We show that SNE coincide with lexicographically optimal points in the base polytope of the matroid in certain settings, and hence show that they can be efficiently computed using any separable convex minimization algorithm (for example, algorithm {\sc Inc-Fix} in Section \ref{mirror}). Given an \emph{observed} Nash-equilibrium, we can also construct a possible loss matrix for which that is a symmetric Nash equilibrium.\\

\paragraph{\textbf{Comparison of approaches.}} Both the learning approaches have different applicability and limitations. We know how to efficiently perform the Bregman projection only for polymatroids, and not for bipartite matchings for which the MWU algorithm with product distributions can be used. On the other hand, there exist matroids (\cite{Azar94}) for which any generalized approximate counting algorithm requires an exponential number of calls to an independence oracle, while an independence oracle is all what we need to make the Bregman projection efficient in the online mirror descent approach. Our characterization of symmetric Nash-equilibria shows that a single projection is enough to compute symmetric equilibria (and check if they exist).

This paper is structured as follows. After discussing related work in Section \ref{related}, we review background and notation in Section \ref{prelim} and show a reformulation for the game that can be solved using separation oracles for the strategy polytopes. We give a primal-style algorithm for separable convex minimization over base polytopes of polymatroids in Section \ref{mirror}. We discuss the special cases of computing Bregman projections under the entropy and Euclidean mirror maps, and show how to solve the subproblems that arise in the algorithm. In Section \ref{mwu}, we show how to simulate the multiplicative weights algorithm in polynomial time using generalized counting oracles. We further show that the MWU algorithm is robust to errors in these algorithms for optimization and counting. Finally in Section \ref{sne}, we completely characterize the structure of symmetric Nash-equilibria for matroid games under symmetric loss matrices. Sections \ref{mirror}, \ref{mwu} and \ref{sne} are independent of each other, and can be read in any order.

\section{Related work}\label{related}
The general problem of finding Nash-equilibria in 2-player games is PPAD-complete (\cite{Chen2006}, \cite{Daskalakis2009}). Restricting to two-player zero-sum games without any assumptions on the structure of the loss functions, asymptotic upper and lower bounds (of the order $O(\log N/\epsilon^2)$) on the support of $\epsilon$-approximate Nash equilibria with $N$ pure strategies are known (\cite{Althofer1994}, \cite{Lipton1994}, \cite{Lipton2003}, \cite{Feder2007}). These results perform a search on the support of the Nash equilibria and it is not known how to find Nash equilibria for large two-player zero sum games in time logarithmic in the number of pure strategies of the players. In recent work, \cite{Hazan2015} show that any online algorithm requires $\tilde{\Omega}(\sqrt{N})$ time to approximate the value of a two-player zero-sum game, even when given access to constant time best-response oracles. In this work, we restrict our attention to two-player zero-sum games with \emph{bilinear} loss functions and give polynomial time algorithms for finding Nash-equilibria (polynomial in the representation of the game). 


One way to find Nash-equilibria is by using regret-minimization algorithms. We look at the problem of learning combinatorial concepts that has recently gained a lot of popularity in the community (\cite{Koolen2015}, \cite{Audibert2013}, \cite{Neu2015}, \cite{Cohen2015}, \cite{Ailon2014}, \cite{Koolen2010}), and has found many real world applications in communication, principal component analysis, scheduling, routing, personalized content recommendation, online ad display etc. There are two popular approaches to learn over combinatorial concepts. The first is the Follow-the-Perturbed-Leader (\cite{Kalai2005}), though efficient in runtime complexity it is suboptimal in terms of regret (\cite{Neu2015}, \cite{Cohen2015}). The other method for learning over combinatorial structures is the online mirror descent algorithm (online convex optimization is attributed to \cite{Zinkevich2003} and mirror descent to \cite{Nemirovski1983}), which fares better in terms of regret but is computationally inefficient (\cite{Srebro2011}, \cite{Audibert2013}). In the first part of the work, we give a novel algorithm that speeds up the mirror descent algorithm in some setting. 

\cite{Koolen2010} introduce the Component Hedge (CH) algorithm for linear loss functions that sum over the losses for each component. They perform multiplicative updates for each component and subsequently perform Bregman projections over certain extended formulations with a polynomial number of constraints, using off-the-shelf convex optimization subroutines. Our work, on the other hand, allows for polytopes with exponentially many inequalities. 
The works of \cite{Helmbold2009} (for learning over permutations) and \cite{Warmuth2008randomized} (for learning $k$-sets) have been shown to be special cases of the CH algorithm. Our work applies to these settings. 

One of our contributions is to give a new efficient algorithm {\sc Inc-Fix} for performing Bregman projections on base polytopes of  polymatroids, a step in the online mirror descent algorithm. Another way of solving the associated separable convex optimization problem over base polytopes of polymatroids is the decomposition algorithm of \cite{Groenevelt1991} (also studied in \cite{Fujishige2005}). Our algorithm is fundamentally different from the decomposition algorithm; the latter generates a sequence of violated inequalities (a dual approach) while our algorithm maintains a feasible point in the polymatroid (a primal approach). The violated inequalities in the decomposition algorithm come from a series of submodular function minimizations, and their computations can be amortized into a parametric submodular function minimization as is shown in \cite{Nagano2007,Nagano2007b,Suehiro2012}. In the special cases of the Euclidean or the entropy mirror maps, our iterative algorithm leads to problems involving the maintenance of feasibility along lines, which reduces to parametric submodular function minimization problems.  In the special case of cardinality-based submodular functions ($f(S)=g(|S|)$ for some concave $g$), our algorithm can be implemented overall in $O(n^2)$ time, matching the running time of a specialized algorithm due to \cite{Suehiro2012}.  


We also consider the multiplicative weights update (MWU) algorithm (\cite{Littlestone1994}, \cite{Freund1999}, \cite{Arora2012}) rediscovered for different settings in game theory, machine learning, and online decision making with a large number of applications. Most of the applications of the MWU algorithm have running times polynomial in the number of pure strategies of the learner, an observation also made in \cite{Blum2008}. In order to perform this algorithm efficiently for exponential experts (with combinatorial structure), it does not take much to see that multiplicative updates for linear losses can be made using product terms. However, the analysis of prior works was very specific to the structure of the problem. For example, \cite{Takimoto2003} give efficient implementations of the MWU for learning over general $s-t$ paths that allow for cycles or over simple paths in acyclic directed graphs. This approach relies on the recursive structure of these paths, and does not generalize to simple paths in an undirected graph (or a directed graph). Similarly, \cite{Helmbold1997} rely on the recursive structure of bounded depth binary decision trees. \cite{Koo2007} use the matrix tree theorem to learn over spanning trees by doing large-margin optimization. We give a general framework to analyze these problems, while drawing a connection to sampling or generalized counting of product distributions.  

\section{Preliminaries}\label{prelim}

In a two-player zero-sum game with loss (or payoff) matrix $R\in \mathbb{R}^{M\times N}$, a mixed strategy $x$ (resp.~$y$) for the row player (resp.~column player) trying to minimize (resp.~maximize) his/her loss is an assignment $x\in \Delta_M$ (resp.~$y\in \Delta_N$) where $\Delta_K$ is the simplex $\{x \in \mathbb{R}^{K}, \sum_{i=1}^K x_i = 1, x\geq 0\}$. A pair of mixed strategies $(x^*,y^*)$ is called a Nash-equilibrium if $x^{*T} R \bar{y} \leq x^{*T}Ry^*\leq \bar{x}^T R y^*$ for all $\bar{x} \in \Delta_M, 
\bar{y} \in \Delta_N$, i.e. there is no incentive for either player to switch from $(x^*, y^*)$ given that the other player does not deviate. Similarly, a pair of strategies $(x^*,y^*)$ is called an $\epsilon-$approximate Nash-equilibrium if $x^{*T}R\bar{y} - \epsilon \leq x^{*T}R y^* \leq \bar{x}^TR y^*+\epsilon$ for all $\bar{x} \in \Delta_M, \bar{y} \in \Delta_N$. Von Neumann showed that every two-player zero-sum game has a mixed Nash-equilibrium that can be found by solving the following dual pair of linear programs: 
\begin{align*} 
(LP1): & \min \lambda & (LP2): & \max \mu\\
& R^T x \leq \lambda e, & & Ry \geq \mu e, \\
& e^{T}x = 1, x\geq 0. & & e^{T}y =1, y\geq 0.
\end{align*}
where $e$ is a vector of all ones in the appropriate dimension. 

In our two-player zero-sum MSP games, we let the strategies of the row player be $\mathcal{U}$ = vert($P$), where $P = \{x \in \mathbb{R}^m, Ax \leq b\}$ is a polytope and vert($P$) is the set of vertices of $P$ and those of the \emph{column} player be $\mathcal{V}$ = vert($Q$) where $Q = \{y \in \mathbb{R}^n, Cy \leq d\}$ is also a polytope. The numbers of pure strategies, $M=|\mathcal{U}|$, $N=|\mathcal{V}|$ will typically be exponential in $m$ or $n$, and so may be the number of rows in the  constraint matrices A and C. The linear programs $(LP1)$ and $(LP2)$ have thus exponentially many variables and constraints. We restrict our attention to bilinear loss functions  that are represented as $R_{uv} = u^{T}Lv$ for some $m\times n$ matrix $L$. 
An artifact of bilinear loss functions is that the bilinearity extends to mixed strategies as well. If $\lambda\in \Delta_{\mathcal{U}}$ and $\theta\in \Delta_{\mathcal{V}}$ are mixed strategies for the players then the expected loss is equal to $x^TLy$ where $x = \sum_{u \in \mathcal{U}} \lambda_u u$ and $y = \sum_{v \in \mathcal{V}} \theta_v v$: 
\begin{align*}
{\mathbb E}_{u,v}(R_{uv}) &= \sum_{u \in \mathcal{U}}\sum_{v \in \mathcal{V}} \lambda_u \theta_v (u^{T}Lv ) = (\sum_{u \in \mathcal{U}} \lambda_u u) L (\sum_{v \in \mathcal{V}} \theta_v v) = x^{T}Ly.
\end{align*}
Thus the loss incurred by mixed strategies only depend on the {\it marginals} of the distributions over the vertices of $P$ and $Q$; distributions with the same marginals give the same expected loss. This plays a crucial role in our proofs. Thus the Nash equilibrium problem for MSP games reduces to (\ref{eqNE}): $\min_{x\in P} \max_{y\in Q} x^TLy=\max_{y\in Q}\min_{x\in P} x^TLy.$

As an example of an MSP game, consider a \emph{spanning tree} game where the pure strategies of each player are the spanning trees of a given graph $G = (V,E)$ with $m$ edges, and $L$ is the $m\times m$ identity matrix. This corresponds to the game in which 
the row player would try to minimize the intersection of his/her spanning tree with that of the column player, whereas the column player would try to maximize the intersection. For a complete graph, the number of pure strategies for each player is $n^{n-2}$ by Cayley's theorem, where $n$ is the number of vertices. For the graph $G$ in Figure \ref{fig:example1}(a), the marginals of the unique Nash equilibrium for both players are given in \ref{fig:example1}(b) and (c). For graphs whose blocks are uniformly dense, both players can play the same (called symmetric) optimal mixed strategy.   
\begin{figure}[htbp]
\begin{center}
\includegraphics[width=\textwidth]{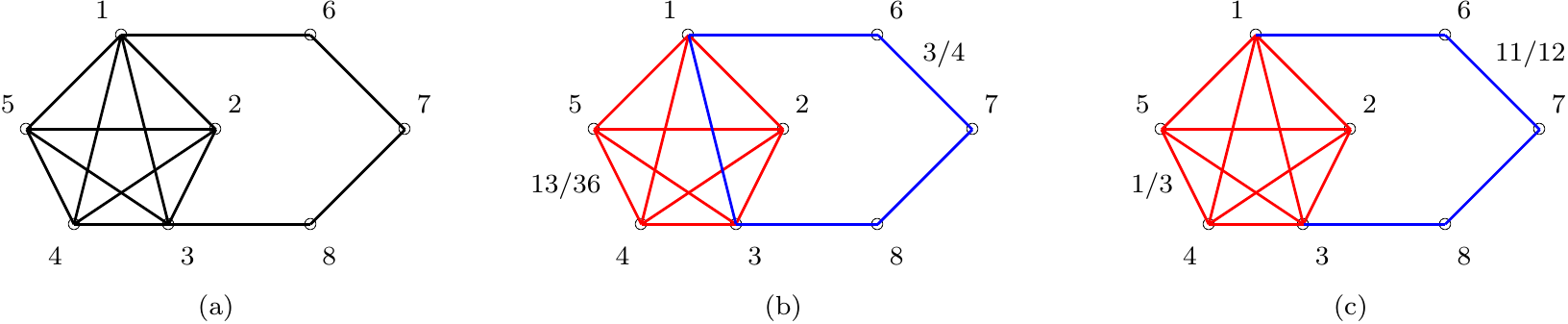}
\caption{\footnotesize (a) $G = (V,E)$, (b) Optimal strategy for the row player minimizing the number of edges in the intersection of the two trees, (c) Optimal strategy for the column player maximizing the number of edges in the intersection.}
\label{fig:example1}
\end{center}
\end{figure}

For MSP games with bilinear losses, the linear programs $(LP1)$ and $(LP2)$ can be reformulated over the space of marginals, and $(LP1)$ becomes
\begin{align} 
(LP1'):  &\min \lambda \nonumber\\
& x^{T}Lv \leq \lambda \;\;\forall\; v \in \mathcal{V}, \label{c_f1}\\
& x \in P\subseteq {\mathbb R}^m, \label{c_f2}
\end{align}
and similarly for $(LP2)$: $\max \{\mu: u^TLy \geq \mu \;\; \forall u\in \mathcal{U}, y\in Q\}$.  This reformulation can be used to show that, for these MSP games with bilinear losses (and exponentially many strategies), there exists a Nash equilibrium with small (polynomial) encoding length. A polyhedron $K$ is said to have \emph{vertex-complexity} at most $\nu$ if there exist finite sets $V, E$ of rational vectors such that $K = \textrm{conv}(V) + \textrm{cone}(E)$ and such that each of the vectors in $V$ and $E$ has encoding length at most $\nu$. A polyhedron $K$ is said to have \emph{facet-complexity} at most $\phi$ if there exists a system of inequalities with rational coefficients that has solution set $K$ such that the (binary) encoding length of each inequality of the system is at most $\phi$.
Let $\nu_P$ and $\nu_Q$ be the vertex complexities of polytopes $P$ and $Q$ respectively; if $P$ and $Q$ are $0/1$ polytopes, we have $\nu_P\leq m$ and $\nu_Q \leq n$. This means that the facet complexity of $P$ and $Q$ are $O(m^2 \nu_P)$ and $O(n^2\nu_Q)$ (see Lemma (6.2.4) in \cite{lovasz1988geometric}). Therefore the facet complexity of the polyhedron in $(LP1')$ can be seen to be  $O(\max(m \langle L\rangle \nu_Q,  n^2\nu_P))$, where $\langle L\rangle$ is the binary enconding length of $L$ and the first term in the $\max$ corresponds to the inequalities (\ref{c_f1}) and  the second to (\ref{c_f2}). From this, we can derive Lemma \ref{vertexcompl}.

\begin{lemma} \label{vertexcompl} 
The vertex complexity of the linear program $(LP1')$ is $O(m^2(m\langle L \rangle \nu_Q + n^2 \nu_P))$ where  $\nu_P$ and $\nu_Q$ are  the vertex complexities of  $P$ and $Q$ and $\langle L\rangle$ is the binary encoding length of $L$. (If $P$ and $Q$ are $0/1$ polytopes then $\nu_P\leq m$ and $\nu_Q \leq n$.)
\end{lemma}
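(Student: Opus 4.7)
The plan is to derive the vertex complexity of the feasible region of $(LP1')$ in two stages: first I would pin down its facet complexity by examining the two families of inequalities that define it, and then I would invoke the standard facet-to-vertex complexity conversion (Lemma (6.2.4) in \cite{lovasz1988geometric}), which states that a polyhedron in $\mathbb{R}^d$ whose facet complexity is at most $\phi$ has vertex complexity $O(d^2\phi)$.

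First I would observe that the ambient dimension of $(LP1')$ is $d = m+1$, since the variables are $x \in \mathbb{R}^m$ and $\lambda \in \mathbb{R}$. The defining inequalities come in two types. For the constraints (\ref{c_f1}), one has, for each $v \in \mathcal{V}$, the inequality $(Lv)^T x - \lambda \leq 0$. Since $v$ ranges over vertices of $Q$, I can replace $\mathcal{V}$ by a set of rational vectors each of encoding length $\nu_Q$; each coefficient $(Lv)_i$ is then a sum of $n$ products of entries of $L$ and $v$, so its bit-length is bounded in terms of $\langle L\rangle$ and $\nu_Q$, and the whole inequality has encoding length $O(m\langle L\rangle \nu_Q)$, as claimed in the text preceding the lemma. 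For the constraints (\ref{c_f2}), $P$ has vertex complexity $\nu_P$ and lives in $\mathbb{R}^m$, so by the same GLS lemma its facet complexity is $O(m^2 \nu_P)$, which in particular is $O(n^2 \nu_P)$.

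Combining the two contributions, the facet complexity of $(LP1')$ is
$$\phi \;=\; O\!\left(\max\bigl(m\langle L\rangle \nu_Q,\; n^2 \nu_P\bigr)\right).$$
Applying the facet-to-vertex conversion with $d = m+1$ gives vertex complexity
$$O\!\bigl((m+1)^2 \,\phi\bigr) \;=\; O\!\bigl(m^2 (m\langle L\rangle \nu_Q + n^2 \nu_P)\bigr),$$
which is exactly the bound in the lemma (the $\max$ is absorbed into a sum up to a factor of $2$). The final sentence about $0/1$ polytopes is immediate, since a $0/1$ vertex of $P \subseteq \mathbb{R}^m$ is encoded in $O(m)$ bits, and similarly for $Q$.

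The only delicate point, and thus the main obstacle, is the careful bookkeeping of encoding lengths when forming the coefficient vector $Lv$ from entries of $L$ and $v$; once one is willing to follow the conservative estimate $\langle Lv\rangle = O(m\langle L\rangle \nu_Q)$ already stated in the paper, the argument is a direct invocation of the classical GLS machinery and no combinatorial insight beyond that is needed.
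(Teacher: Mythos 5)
Your proposal is correct and follows essentially the same route as the paper: bound the encoding length of each inequality of type (\ref{c_f1}) by $O(m\langle L\rangle \nu_Q)$, bound the facet complexity of the constraint $x\in P$ via the vertex-to-facet direction of Lemma (6.2.4) of \cite{lovasz1988geometric}, and then convert the facet complexity of the $(m+1)$-dimensional feasible region of $(LP1')$ back to vertex complexity with the same lemma, picking up the factor $O(m^2)$. The only point to note is your passing step that $O(m^2\nu_P)$ ``is in particular $O(n^2\nu_P)$'', which is exactly the same $m$/$n$ bookkeeping implicit in the paper's own derivation of the stated bound, so your argument matches the paper's.
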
 

This means that our polytope defining $(LP1')$ is well-described (\`a la Gr\"{o}tschel et al.). We can thus use the machinery of the ellipsoid algorithm (\cite{Grotschel1981}) to find a Nash Equilibrium in polynomial time for these MSP games, provided we can optimize (or separate) over $P$ and $Q$.  Indeed, by the ellipsoid algorithm, we have the equivalence between strong separation and strong optimization for well-described polyhedra. The strong separation over (\ref{c_f1}) reduces to strong optimization over $Q$, while a strong separation algorithm over (\ref{c_f2}), i.e. over $P$, can be obtained from a strong separation over $P$ by the ellipsoid algorithm.

We should also point out at this point that, if the polyhedra $P$ and $Q$ admit a compact extended formulation then $(LP1')$ can also be reformulated in a compact way (and solved using interior point methods, for example). A compact extended formulation for a polyhedron $P\subseteq {\mathbb R}^d$ is a polytope with polynomially many (in $d$) facets in a higher dimensional space that projects onto $P$. This allows to give a compact extended formulation for $(LP1')$ for the spanning tree game as a  compact formulation is known for the spanning tree polytope (\cite{Martin1991}) (and any other game where the two strategy polytopes can be described using polynomial number of inequalities). However, this would not work for a corresponding matching game since the extension complexity for the matching polytope is exponential (\cite{Rothvoss2014}).

\section{Online mirror descent}\label{mirror}
In this section, we show how to perform mirror descent faster, by providing algorithms for minimizing strongly convex functions over base polytopes of polymatroids. 

Consider a compact convex set $X \subseteq \mathbb{R}^n$, and let $\mathcal{D} \subseteq \mathbb{R}^n$ be a convex open set such that $X$ is included in its closure. A mirror map (or a distance generating function) is a $k$-strongly convex function\footnote{A function $f$ is said to be $k$-strongly convex over domain $\mathcal{D}$ with respect to a norm $||\cdot||$ if $f(x) \geq f(y) + \nabla f(y)^T(x-y) + \frac{k}{2}||x-y||^2$.} and differentiable function $\omega : \mathcal{D} \rightarrow \mathbb{R}$ that satisfies additional properties of divergence of the gradient on the boundary of $\mathcal{D}$, i.e., $\lim_{x \rightarrow \partial \mathcal{D}} || \nabla \omega (x)|| = \infty$ (for details, refer to \cite{Nemirovski1983}, \cite{Beck2003}, \cite{Bubeck2014}). In particular, we consider two important mirror maps in this work, the Euclidean mirror map and the unnormalized entropy mirror map. The Euclidean mirror map is given by $\omega(x) = \frac{1}{2} ||x||^2$, for $\mathcal{D} = \mathbb{R}^E$ and is 1-strongly convex with respect to the $L_2$ norm. The unnormalized entropy map is given by $\omega(x) = \sum_{e \in E} x(e) \ln (x(e)) - \sum_{e \in E} x(e)$, for $\mathcal{D} = \mathbb{R}_+^{E}$ and is 1-strongly convex over the unnormalized simplex with respect to the $L_1$ norm (see proof in Appendix \ref{app:mirror_descent}, Lemma \ref{strong-convexity}). 

The Online Mirror Descent (OMD) algorithm with respect to the mirror map $\omega$ proceeds as follows. Think of $X$ as the strategy polytope of the row player or the learner in the online algorithm that is trying to minimize regret over the points in the polytope with respect to loss vectors $l^{(t)}$ revealed by the adversary in each round. The iterative algorithm starts with the first iterate $x^{(1)}$ equal to the $\omega$-center of $X$ given by $\arg\min_{x\in X} \omega(x)$. Subsequently, for $t>1$, the algorithm first moves in unconstrained way using 
$$ \nabla\omega(y^{(t+1)}) = \nabla\omega(x^{(t)}) - \eta \nabla l^{(t)}(x^{(t)}).$$
Then the next iterate $x^{(t+1)}$ is obtained by a projection step:
\begin{equation}  \label{gradientstep}
x^{(t+1)} = \arg\min_{x \in X \cap \mathcal{D}} D_{\omega} (x, y^{(t+1)}),
\end{equation}
where the generalized notion of projection is defined by  the function $D_{\omega} (x, y) = \omega(x) - \omega(y) - \nabla \omega (y)^{T}(x-y)$, called the Bregman divergence of the mirror map. Since the Bregman divergence is a strongly convex function in $x$ for a fixed $y$, there is a unique minimizer $x^{(t+1)}$. For the two mirror maps discussed above, the divergence is $D_{\omega}(x,y) = \frac{1}{2} ||x-y||^2$ for $y \in R^{E}$ for the Euclidean mirror map, and is $D_{\omega}(x,y) = \sum_{e \in E} x(e) \ln (x(e)/y(e)) - \sum_{e \in E} x(e) + \sum_{e\in E} y(e)$ for the entropy mirror map.

The regret of the online mirror descent algorithm is known to scale as $O(RG\sqrt{t})$ where $R$ depends on the geometry of the convex set and is given by $R^2 = \arg\max_{x \in X} \omega(x) - \arg\min_{x \in X} \omega(x)$ and $G$ is the Lipschitz constant of the underlying loss functions, i.e., $||\nabla l^{(i)}||_* \leq G$ for all $i = 1, \hdots, T$. We restate the theorem about the regret of the online mirror-descent algorithm (adapted from \cite{Bubeck2011}, \cite{Ben2011}, \cite{Rakhlin2014}). 

\begin{theorem} \label{bubeck_omd} Consider online mirror descent based on a k-strongly convex (with respect to $||\cdot ||$) and differentiable mirror map $\omega: \mathcal{D} \rightarrow \mathbb{R}$ on a closed convex set X. Let each loss function $l^{(i)}: X\rightarrow \mathbb{R}$ be convex and G-Lipschitz, i.e. $||\nabla l^{(i)} ||_* \leq G \;\;\forall i \in \{1, \hdots, t\}$ and let the radius $R^2 = \arg\max_{x\in X} \omega(x) - \arg\min_{x \in X} \omega(x)$. Further, we set $\eta = \frac{R}{G} \sqrt{\frac{2k}{t}}$ then:
$$ \sum_{i=1}^t l^{(i)}(x_i) - \sum_{i=1}^t l^{(i)}(x^*) \leq RG\sqrt{\frac{2t}{k}} \;\;\; \textrm{ for all } x^* \in X.$$
\end{theorem}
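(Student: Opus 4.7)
The plan is to follow the textbook OMD analysis: turn the instantaneous regret into an inner product by convexity, rewrite that inner product using the mirror-descent update, invoke the three-point identity and Pythagorean inequality to telescope the Bregman divergences to $x^*$, and bound the remaining ``deviation'' term using strong convexity of $\omega$.

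\textbf{Step 1 (reduce to linear losses).} Convexity of $l^{(i)}$ gives
$l^{(i)}(x_i)-l^{(i)}(x^*)\le \nabla l^{(i)}(x_i)^{T}(x_i-x^*).$
By definition of $y^{(i+1)}$, $\eta\nabla l^{(i)}(x_i)=\nabla\omega(x_i)-\nabla\omega(y^{(i+1)})$, so
$\eta\bigl(l^{(i)}(x_i)-l^{(i)}(x^*)\bigr)\le \bigl(\nabla\omega(x_i)-\nabla\omega(y^{(i+1)})\bigr)^{T}(x_i-x^*).$
Expanding each $\nabla\omega$ using $D_\omega(a,b)=\omega(a)-\omega(b)-\nabla\omega(b)^{T}(a-b)$ yields the standard three-point identity
$\bigl(\nabla\omega(x_i)-\nabla\omega(y^{(i+1)})\bigr)^{T}(x_i-x^*)=D_\omega(x^*,x_i)-D_\omega(x^*,y^{(i+1)})+D_\omega(x_i,y^{(i+1)}).$

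\textbf{Step 2 (Pythagorean inequality).} Because $x_{i+1}$ is the Bregman projection of $y^{(i+1)}$ onto the convex set $X$, first-order optimality gives $\langle\nabla\omega(x_{i+1})-\nabla\omega(y^{(i+1)}),x^*-x_{i+1}\rangle\ge 0$, which (combined with the three-point identity) is equivalent to $D_\omega(x^*,y^{(i+1)})\ge D_\omega(x^*,x_{i+1})+D_\omega(x_{i+1},y^{(i+1)})\ge D_\omega(x^*,x_{i+1})$. Substituting,
$\eta\bigl(l^{(i)}(x_i)-l^{(i)}(x^*)\bigr)\le D_\omega(x^*,x_i)-D_\omega(x^*,x_{i+1})+D_\omega(x_i,y^{(i+1)}).$

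\textbf{Step 3 (bound $D_\omega(x_i,y^{(i+1)})$ -- the main obstacle).} This is the only step that uses $k$-strong convexity of $\omega$ in a nontrivial way, and I expect it to be the subtle part of the argument. The clean way is via the Fenchel conjugate $\omega^*$, which is $\tfrac{1}{k}$-smooth w.r.t.\ the dual norm $\|\cdot\|_*$. Writing
$D_\omega(x_i,y^{(i+1)})=\omega(x_i)+\omega^*\!\bigl(\nabla\omega(y^{(i+1)})\bigr)-\nabla\omega(y^{(i+1)})^{T}x_i,$
using $\nabla\omega^*(\nabla\omega(x_i))=x_i$ and $\omega^*(\nabla\omega(x_i))=\nabla\omega(x_i)^{T}x_i-\omega(x_i)$, the $\tfrac{1}{k}$-smoothness of $\omega^*$ applied between the dual points $\nabla\omega(x_i)$ and $\nabla\omega(y^{(i+1)})=\nabla\omega(x_i)-\eta\nabla l^{(i)}(x_i)$ telescopes all the linear terms and leaves
$D_\omega(x_i,y^{(i+1)})\le \frac{\eta^{2}}{2k}\bigl\|\nabla l^{(i)}(x_i)\bigr\|_*^{2}\le \frac{\eta^{2}G^{2}}{2k}.$

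\textbf{Step 4 (telescope and optimize $\eta$).} Summing the bound of Step 2 over $i=1,\dots,t$ and using Step 3,
$\eta\sum_{i=1}^{t}\bigl(l^{(i)}(x_i)-l^{(i)}(x^*)\bigr)\le D_\omega(x^*,x_1)-D_\omega(x^*,x_{t+1})+\frac{t\eta^{2}G^{2}}{2k}\le D_\omega(x^*,x_1)+\frac{t\eta^{2}G^{2}}{2k}.$
Since $x_1$ minimizes $\omega$ on $X$, first-order optimality gives $\nabla\omega(x_1)^{T}(x^*-x_1)\ge 0$, hence $D_\omega(x^*,x_1)\le \omega(x^*)-\omega(x_1)\le R^{2}$. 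Dividing by $\eta$ gives $\mathrm{Regret}\le R^{2}/\eta+t\eta G^{2}/(2k)$; plugging in $\eta=(R/G)\sqrt{2k/t}$ makes the two terms equal to $RG\sqrt{t/(2k)}$ each, yielding the advertised bound $RG\sqrt{2t/k}$.
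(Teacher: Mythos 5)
Your proof is correct: the reduction to linear losses, the three-point identity plus the generalized Pythagorean inequality for the Bregman projection, the $\tfrac{1}{k}$-smoothness-of-$\omega^*$ bound $D_\omega(x_i,y^{(i+1)})\le \eta^2G^2/(2k)$, and the telescoping with $D_\omega(x^*,x_1)\le \omega(x^*)-\omega(x_1)\le R^2$ (using that $x_1$ is the $\omega$-center, and correctly reading the paper's $\arg\max/\arg\min$ in the definition of $R^2$ as $\max-\min$) all check out, and the choice $\eta=(R/G)\sqrt{2k/t}$ indeed balances the two terms to give $RG\sqrt{2t/k}$. The paper itself does not prove this theorem — it restates it, adapted from the cited references (Bubeck, Ben-Tal, Rakhlin) — and your argument is exactly the standard analysis underlying those sources, so there is nothing to reconcile.
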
 

\subsection{Convex Minimization on Base Polytopes of Polymatroids}\label{convex_min}
In this section, we consider the setting in which $X$ is the base polytope of a polymatroid. Let $f$ be a monotone submodular function, i.e., $f$ must satisfy the following conditions: (i) (monotonicity) $f(A) \leq f(B)$ for all $A \subseteq B \subseteq E$, and (ii) (submodularity) $f(A) + f(B) \geq f(A \cup B) + f(A \cap B)$ for all $A, B \subseteq E$. Furthermore, we will assume $f(\emptyset) =0$ (normalized) and without loss of generality we assume that $f(A)>0$ for $A \neq \emptyset$. Given such a function $f$, the independent set polytope is defined as $P(f) = \{x \in \mathbb{R}_+^\mathrm{E} : x(U) \leq f(U)\;\forall\; U \subseteq E\}$ and the base polytope as $B(f) = \{x \in \mathbb{R}_+^\mathrm{E} : x(E) = f(E),  x(U) \leq f(U)\;\forall\;U \subseteq E\}$ (\cite{Edmonds1970}). A typical example is when $f$ is the rank function of a matroid, and the corresponding base polytope corresponds to the convex hull of its bases. Bases of a matroid include spanning trees (bases of a graphic matroid), $k$-sets (uniform matroid), maximally matchable sets of vertices in a graph (matching matroid), or maximal subsets of $T\subseteq V$ having disjoint paths from vertices in $S\subseteq V$ in a directed graph $G=(V,E)$ (gammoid).  

Let us consider any strongly convex separable function $h: \mathcal{D} \rightarrow \mathbb{R}$, defined over a convex open set $\mathcal{D}$ such that $P(f) \subseteq \overline{\mathcal{D}}$ (i.e., closure of $\mathcal{D}$) and $\nabla h(\mathcal{D}) = \mathbb{R}^E$. We require that either $0 \in \mathcal{D}$ or there exists some $x \in P(f)$ such that $\nabla h(x) = c\chi(E), c \in \mathbb{R}$. These conditions hold, for example, for minimizing the Bregman divergence of the two mirror maps (Euclidean and entropy) we discussed in the previous section. We present in this section an algorithm {\sc Inc-Fix}, that minimizes such convex functions $h$ over the base polytope $B(f)$ of a given monotone normalized submodular function $f$. Our approach can be interpreted as a generalization of Fujishige's \emph{monotone algorithm} for finding a lexicographically optimal base to handle general separable convex functions.

\paragraph{\textbf{Key idea:}} The algorithm is iterative and maintains a vector $x \in P(f) \cap \mathcal{D}$. When considering $x$ we associate a weight vector given by $\nabla h(x)$ and consider the set of minimum weight elements. We move $x$ within $P(f)$ in a direction such that $(\nabla h(x))_e$ increases uniformly on the minimum weight elements, until one of two things happen: (i) either continuing further would violate a constraint defining $P(f)$, or (ii) the set of elements of minimum weight changes. If the former happens, we \emph{fix} the tight elements and continue the process on non-fixed elements. If the latter happens, then we continue increasing the value of the elements in the modified set of minimum weight elements. The complete description of the {\sc Inc-Fix} algorithm is given in Algorithm \ref{inc-fix}. We refer to the initial starting point as $x^{(0)}$. The algorithm constructs a sequence of points $x^{(0)}, x^{(1)},$ $\hdots, x^{(k)} = x^*$ in $P(f)$. At the beginning of iteration $i$, the set of \emph{non-fixed} elements whose value can potentially be increased without violating any constraint is referred to as $N_{i-1}$. The iterate $x^{(i)}$ is obtained by \emph{increasing} the value of minimum weight elements of $x^{(i-1)}$ in $N_{i-1}$ weighted by $(\nabla h(x))_e$ such that the resulting point stays in $P(f)$. Iteration $i$ of the main loop ends when some non-fixed element becomes tight and we \emph{fix} the value on these elements by updating $N_i$. We continue until all the elements are fixed, i.e., $N_i = \emptyset$. We denote by $T(x): \mathbb{R}^\mathrm{E} \rightarrow 2^\mathrm{E}$ the maximal set of tight elements in $x$ (which is unique by submodularity of $f$). 

\begin{algorithm}
\caption{{\sc Inc-Fix}}
\label{inc-fix}
 \Input{$f: 2^E \rightarrow \mathbb{R}$, $h = \sum_{e \in E} h_e$, and input $x^{(0)}$}
 \Output{$x^* = \arg\min_{z\in B(f)} \sum_{e} h_e(z(e))$}
{$N_0 = E, i=0$\;}\\
 \Repeat(\tcc*[f]{Main loop}){$N_{i} = \emptyset$}{
 	$i \leftarrow i+1$\;\\
	$x = x^{(i-1)}$\;\\
	$M = \arg \min_{e \in N_{i-1}} \nabla (h(x))_e$\;\\
	\While(\tcc*[f]{Inner loop}){$T(x) \cap M = \emptyset$} {
		$\epsilon_1 = \max \{ \delta : (\nabla h)^{-1}(\nabla h (x) + \delta \chi(M)) \in P(f)\}$\;\\
		$\epsilon_2 = \arg\min_{e\in N_{i-1} \setminus M} (\nabla h(x))_e - \arg\min_{e \in N_{i-1}} (\nabla h(x))_e$\;\\		         $x \leftarrow (\nabla h)^{-1}(\nabla h(x) + \min(\epsilon_1, \epsilon_2) \chi(M));$\tcc*[f]{Increase}\\
		$M = \arg \min_{e \in N_{i-1}} (\nabla h(x))_e$\;
	}
	$x^{(i)} = x$, $M_i = \arg \min_{e \in N_{i-1}} (\nabla h(x^{(i)}))_e$\;
	$N_{i} = N_{i-1} \setminus (M_i \cap T(x^{(i)}))$\tcc*[f]{Fix}
}
Return $x^{*} = x^{(i)}$.
\end{algorithm}
\paragraph{\textbf{Choice of starting point:}} We let $x^{(0)}=0$ unless $0 \notin \mathcal{D}$; observe that $0 \in P(f) \subseteq \overline{\mathcal{D}}$. In the latter case, we let $x^{(0)} \in P(f)$ such that $\nabla h(x^{(0)}) = c\chi(E)$ for some $c \in \mathbb{R}$. For example, for the Euclidean mirror map and some $y \in \mathbb{R}^{E}$, $\nabla h(x) = \nabla D_{\omega}(x, y) = x-y$ and $\mathcal{D} = \mathbb{R}^{E}$, hence we start the algorithm with $x^{(0)}=0$. However, for the entropy mirror map and some positive (component-wise) $y \in \mathbb{R}^{E}$, $\nabla h(x) = \nabla D_{\omega}(x, y) = \ln (\frac{x}{y})$ and $\mathcal{D} = \mathbb{R}_{>0}^{E}$. We thus start the algorithm with $x^{(0)} = cy$ for a small enough $c>0$ such that $x \in P(f)$. 

We now state the main theorem to prove correctness of the algorithm. 
\begin{theorem} \label{inc-fix-correctness} 
Consider a $k$-strongly convex and separable function $\sum_{e \in E} h_e(\cdot) : \mathcal{D} \rightarrow \mathbb{R}$ where $\mathcal{D}$ is a convex open set in $\mathbb{R}^E$, $P(f) \subseteq \overline{\mathcal{D}}$ and $\nabla h (\mathcal{D}) = \mathbb{R}^E$ such that either $0 \in \mathcal{D}$ or there exists $x \in P(f)$ such that $\nabla h(x) = c \chi(E)$ for $c \in \mathbb{R}$. Then, the output of {\sc Inc-Fix} algorithm is $x^* = \arg \min_{z \in B(f)} \sum_{e} h_e(z(e))$.
\end{theorem}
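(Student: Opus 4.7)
The plan is to reduce correctness to a KKT-style criterion and verify it from a single structural invariant. I would first characterize optimality as follows: since $h$ is separable and strongly convex and $B(f)$ is a polymatroid base polytope, $x^{\ast}\in B(f)$ is optimal iff $\nabla h(x^{\ast})^{T}(y-x^{\ast})\geq 0$ for all $y\in B(f)$, equivalently $x^{\ast}$ minimizes the linear function $\nabla h(x^{\ast})^{T}y$ over $B(f)$. By Edmonds' greedy characterization of linear optimization over $B(f)$, this holds iff for every $\alpha\in\mathbb{R}$ the level set $L_{\alpha}:=\{e:h'_{e}(x^{\ast}_{e})\leq\alpha\}$ is tight at $x^{\ast}$, i.e.\ $x^{\ast}(L_{\alpha})=f(L_{\alpha})$. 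Thus the theorem reduces to showing that the output $x^{\ast}$ lies in $B(f)$ and that every level set of $\nabla h(x^{\ast})$ is tight at $x^{\ast}$.

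I would then prove by induction on $i$ the two-part invariant: (a) $U_{i}=T(x^{(i)})$, i.e.\ the fixed elements coincide with the maximal tight set at $x^{(i)}$; and (b) $\nabla h(x^{(i)})$ takes a common value $\alpha_{i}$ on $N_{i}$, with $\alpha_{0}<\alpha_{1}<\cdots$ strictly increasing. Part (b) at $i=0$ is exactly the hypothesis on the starting point. For the inductive step, (b) at $i-1$ forces $M=N_{i-1}$ at the start of iteration $i$, making $N_{i-1}\setminus M=\emptyset$ and $\epsilon_{2}=\infty$ throughout the inner loop. Since $M\subseteq N_{i-1}$ is disjoint from $T(x^{(i-1)})=U_{i-1}$ by (a), every currently tight set $S\subseteq U_{i-1}$ has $S\cap M=\emptyset$ and is unaffected by a move in direction $\chi(M)$, so $\epsilon_{1}>0$; boundedness of $P(f)$ gives $\epsilon_{1}<\infty$. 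After the step, the binding constraint defines a newly tight set $T^{\ast}$ that must intersect $M$ (only constraints with $S\cap M\ne\emptyset$ get tightened by a move in direction $\chi(M)$), triggering exit of the inner loop. Hence $M_{i}=N_{i-1}$, $S_{i}=N_{i-1}\cap T(x^{(i)})=T(x^{(i)})\setminus U_{i-1}$, and $U_{i}=U_{i-1}\cup S_{i}=T(x^{(i)})$, establishing (a). All remaining coordinates in $N_{i}$ had their gradient raised by exactly $\epsilon_{1}>0$ during iteration $i$, so they now share the value $\alpha_{i}=\alpha_{i-1}+\epsilon_{1}$, establishing (b).

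Finally I would assemble the pieces. Each iteration strictly enlarges $U_{i}$ because $S_{i}\ne\emptyset$ (the newly tight set contributes at least one element), so the algorithm terminates in at most $|E|$ outer iterations with $U_{k}=E$; invariant (a) then gives $x^{\ast}(E)=f(E)$, so $x^{\ast}\in B(f)$. For every $e\in S_{i}$, the coordinate $x^{\ast}_{e}$ is frozen from iteration $i$ onward because subsequent iterations only modify $N_{i}$-coordinates, hence $\nabla h(x^{\ast})_{e}=\alpha_{i}$. By strict monotonicity of the $\alpha_{i}$, for any $\alpha$ the level set $L_{\alpha}$ equals $U_{j}$ for the largest $j$ with $\alpha_{j}\leq\alpha$ (or $\emptyset$ if $\alpha<\alpha_{1}$), which is tight at $x^{\ast}$ by (a). This verifies the criterion from the first paragraph, proving optimality.

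The hardest step will be the joint invariant (a)--(b); trying to prove (a) alone is brittle because in principle the inner loop's $\epsilon_{2}$-mechanism could leave $M_{i}$ strictly smaller than $N_{i-1}$, and the identification $U_{i}=T(x^{(i)})$ could fail if tight sets were allowed to contain unfixed elements outside $M_{i}$. The crucial observation is that the algorithm's starting-point hypothesis (uniform initial gradient) propagates through $N_{i}$ at every stage, reducing each outer iteration to a single Fujishige-style ``increase uniformly until a new set becomes tight'' step; with this in hand, the rest of the proof is essentially bookkeeping.
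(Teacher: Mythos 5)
Your overall strategy (first--order optimality plus Edmonds' greedy to reduce correctness to ``every level set of $\nabla h(x^*)$ is tight,'' then an induction on the outer iterations) is the same as the paper's, but your induction only covers half of the theorem's hypothesis, and that is a genuine gap. The hypothesis is a disjunction: \emph{either} $0\in\mathcal{D}$, in which case the algorithm starts at $x^{(0)}=0$, \emph{or} there is a point of $P(f)$ with gradient $c\chi(E)$, in which case the algorithm starts there. Your invariant (b) at $i=0$ (``$\nabla h(x^{(0)})$ is constant on $N_0$'') is \emph{not} ``exactly the hypothesis on the starting point'': it holds only in the second branch. In the first branch --- which is exactly the case used for Euclidean projections, where $\nabla h(0)=-y$ is in general non-uniform --- your induction collapses immediately: $M$ is a strict subset of $N_{i-1}$, $\epsilon_2$ is finite and genuinely operative, $M_i$ can be strictly smaller than $N_{i-1}$, and the fixed set $U_i$ need not equal the maximal tight set $T(x^{(i)})$, since $T(x^{(i)})$ can contain unfixed elements outside $M_i$. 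You even flag this scenario as the ``brittle'' one and then dismiss it by appealing to a ``uniform initial gradient'' hypothesis that the theorem does not grant you.

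To close the gap one has to argue the $x^{(0)}=0$ case separately, which is what the paper does: there the unfixed elements outside the current minimum set $M_i$ never move, hence stay at value $0$, and a short lemma about zero-valued elements (if $S$ is tight at $x\in P(f)$ then so is $S\setminus\{e:x(e)=0\}$, using monotonicity of $f$) lets one strip those zero coordinates off $T(x^{(i)})$ and still conclude that each prefix $L_1\cup\cdots\cup L_i$ of fixed level sets is tight. Without this (or some substitute handling non-uniform starting gradients and the $\epsilon_2$ branch of the inner loop), your argument proves the theorem only for starting points with $\nabla h(x^{(0)})=c\chi(E)$, e.g.\ the entropy-map projections, and not for the Euclidean case that the theorem is also meant to cover. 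The rest of your write-up --- the optimality criterion, the monotone growth of the fixed set, termination in at most $|E|$ outer iterations, and the strictly increasing common gradient values in the uniform-start case --- matches the paper's claims (a)--(f) and (g$'$)--(h$'$) and is fine.
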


The proof relies on the following optimality conditions, which follows from first order optimality conditions and Edmonds' greedy algorithm. 

\begin{theorem} \label{main-optimality} Consider any strongly convex separable function $h(x): \mathcal{D}\rightarrow \mathbb{R}$ where $h(x) = \sum_{e \in E} h_e (x(e))$, and any monotone submodular function $f: 2^{E} \rightarrow \mathbb{R}$ with $f(\emptyset)=0$. Assume $P(f) \subseteq \overline{\mathcal{D}}$ and $\nabla h (\mathcal{D}) = \mathbb{R}^E$. Consider $x^* \in \mathbb{R}^{E}$. Let $F_1, F_2, \hdots, F_k$ be a partition of the ground set $E$ such that $(\nabla h(x^*))_e = c_i$ for all $e \in F_i$ and $c_i < c_j$ for $i<j$. Then, $x^* = \arg\min_{z \in B(f)} \sum_{e \in E} h_e(z(e))$ if and only if $x^*$ lies in the face $H_{opt}$ of $B(f)$ given by
$$H_{opt}:= \{ z \in B(f) | \;z(F_1 \cup \hdots \cup F_i) = f(F_1 \cup \hdots \cup F_i) \; \forall \; 1 \leq i \leq k \}.$$
\end{theorem}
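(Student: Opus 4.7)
}

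The plan is to reduce the nonlinear optimality condition over $B(f)$ to a linear one, and then apply the standard polymatroidal characterization of optimal faces (i.e., Edmonds' greedy algorithm). Since $h$ is strongly convex (and $\nabla h$ is defined on the closure of $B(f)$ by the assumption $P(f)\subseteq\overline{\mathcal D}$, $\nabla h(\mathcal D)=\mathbb R^E$), a point $x^*\in B(f)$ is the unique minimizer of $h$ over $B(f)$ if and only if the first-order condition
\[
\nabla h(x^*)^T(z-x^*)\ \geq\ 0\quad\text{for all }z\in B(f)
\]
holds; equivalently, $x^*$ minimizes the linear functional $c^Tz$ with $c:=\nabla h(x^*)$ over $B(f)$. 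Thus it suffices to identify the face of minimizers of $c^Tz$ over $B(f)$ when $c$ takes values $c_1<\cdots<c_k$ on the partition $F_1,\ldots,F_k$.

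First I would write the linear function using a telescoping decomposition along the level sets of $c$. Setting $S_i:=F_1\cup\cdots\cup F_i$ and $S_0:=\emptyset$, and using $z(F_i)=z(S_i)-z(S_{i-1})$,
\[
c^Tz\ =\ \sum_{i=1}^{k}c_i\,z(F_i)\ =\ c_k\,z(E)\ -\ \sum_{i=1}^{k-1}(c_{i+1}-c_i)\,z(S_i).
\]
Since $z(E)=f(E)$ is fixed on $B(f)$, since $z(S_i)\le f(S_i)$ by definition of $B(f)$, and since every coefficient $c_{i+1}-c_i$ is strictly positive, this yields the lower bound
\[
c^Tz\ \geq\ c_k f(E)\ -\ \sum_{i=1}^{k-1}(c_{i+1}-c_i)\,f(S_i),
\]
with equality if and only if $z(S_i)=f(S_i)$ for every $i=1,\ldots,k$. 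In other words, the set of minimizers of $c^Tz$ on $B(f)$ is exactly the face $H_{opt}$ in the statement, provided this face is nonempty; nonemptiness is guaranteed by Edmonds' greedy algorithm, which produces a vertex of $B(f)$ tight on the chain $S_1\subset S_2\subset\cdots\subset S_k$ (here monotonicity and submodularity of $f$ ensure the greedy output lies in $B(f)$).

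With these two ingredients in hand, both directions are immediate. If $x^*$ minimizes $h$ over $B(f)$, the first-order condition forces $x^*$ to minimize $c^Tz=\nabla h(x^*)^Tz$, so $x^*\in H_{opt}$ by the telescoping identity. Conversely, if $x^*\in H_{opt}$, then $x^*$ attains the lower bound above, hence is a minimizer of $c^Tz=\nabla h(x^*)^Tz$ over $B(f)$; by convexity of $h$, $h(z)\geq h(x^*)+\nabla h(x^*)^T(z-x^*)\geq h(x^*)$ for every $z\in B(f)$, and $x^*$ is optimal. The only place one has to be careful is that the partition $\{F_i\}$ is defined in terms of $x^*$ itself, but this is harmless: the telescoping identity and the greedy argument only depend on the level-set structure of the vector $c=\nabla h(x^*)$, which is exactly what defines the $F_i$. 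The main (minor) obstacle is simply making sure one handles the boundary case $k=1$ correctly (where $H_{opt}=B(f)$ and the only tight constraint is $z(E)=f(E)$), and ensuring that $\nabla h$ is evaluated in $\mathcal D$, which is guaranteed by the hypotheses on $h$.
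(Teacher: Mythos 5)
Your proposal is correct and follows essentially the same route as the paper: reduce the convex problem to linear optimization over $B(f)$ via the first-order optimality condition $\nabla h(x^*)^T(z-x^*)\geq 0$, and then identify the optimal face as the set of points tight on the chain $F_1\subset F_1\cup F_2\subset\cdots$. The only difference is cosmetic: where the paper cites Edmonds' greedy algorithm for the characterization of the minimizing face, you reprove it directly via the telescoping (Abel summation) lower bound, using greedy only for nonemptiness, which is a perfectly valid self-contained substitute.
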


\begin{proof} By first order optimality conditions, we know that $x^* = \arg\min_{z \in B(f)} \sum_{e} h_e(x(e))$ if and only if $\nabla h(x^*)^T(z - x^*) \geq 0$ for all $z \in B(f)$. This is equivalent to $x^* \in \arg \min_{z \in B(f)} \nabla h(x^*)^Tz$. Now consider the partition $F_1, F_2, \hdots, F_k$ as defined in the statement of the theorem. Using Edmonds' greedy algorithm \cite{Edmonds1971}, we know that any $z^* \in B(f)$ is a minimizer of $\nabla h(x^*)^Tz$ if and only if it is tight (i.e., full rank) on each $F_1 \cup \hdots F_{i}$ for $i = 1, \hdots, k$, i.e., $z^*$ lies in the face $H_{opt}$ of $B(f)$ given by
$$H_{opt}:= \{ z \in B(f) | \;z(F_1 \cup \hdots \cup F_i) = f(F_1 \cup \hdots \cup F_i) \; \forall \; 1 \leq i \leq k \}.$$
\end{proof} 

Note that at the end of the algorithm, there may be some elements at zero value (specifically in cases where $x^{(0)}=0$).  In our proof for correctness for the algorithm, we use the following simple lemma about zero-valued elements.   

\begin{lemma} \label{nulledges} For $x \in P(f)$, if a subset $S$ of elements is tight then so is $S \setminus \{e: x(e) = 0\}$.
\end{lemma}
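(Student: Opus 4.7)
The plan is a short chain of two inequalities sandwiching $x(S \setminus Z) = f(S \setminus Z)$, where $Z := \{e \in E : x(e) = 0\}$. First I would observe that since elements in $Z$ contribute nothing to the sum, $x(S \setminus Z) = x(S) - x(S \cap Z) = x(S)$, and by hypothesis $x(S) = f(S)$, so $x(S \setminus Z) = f(S)$.

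Next I would apply the two constraints I have on $f(S \setminus Z)$. On the one hand, feasibility $x \in P(f)$ gives the upper bound $x(S \setminus Z) \leq f(S \setminus Z)$. On the other hand, monotonicity of $f$ (which is assumed throughout this section) applied to $S \setminus Z \subseteq S$ gives $f(S \setminus Z) \leq f(S)$. Chaining these yields
\[
f(S) = x(S \setminus Z) \leq f(S \setminus Z) \leq f(S),
\]
so all quantities are equal, and in particular $x(S \setminus Z) = f(S \setminus Z)$, which is exactly the statement that $S \setminus Z$ is tight.

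There is no real obstacle here: the only ingredients are monotonicity of $f$, membership of $x$ in $P(f)$, and the elementary identity that removing zero-valued coordinates does not change $x(S)$. The lemma is essentially a bookkeeping fact that will later let the correctness proof of \textsc{Inc-Fix} freely drop from any tight set the elements whose current value is zero (e.g., at termination when $x^{(0)} = 0$ is the initial iterate).
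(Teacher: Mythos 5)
Your proof is correct and is essentially identical to the paper's argument: both establish the chain $f(S\setminus Z) \geq x(S\setminus Z) = x(S) = f(S) \geq f(S\setminus Z)$ using feasibility, the vanishing of $x$ on the removed elements, tightness of $S$, and monotonicity of $f$, forcing equality throughout. The only difference is notational (the paper writes $S = S_1 \cup S_2$ with $x(S_2)=0$ instead of your $S \setminus Z$).
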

\begin{proof} Let $S = S_1 \cup S_2$ such that $x(S_2) = 0$ and $x(e) > 0 $ for all $e \in S_1$. Then, $f(S_1) \geq x(S_1) = x(S_1 \cup S_2) = f(S_1 \cup S_2) \geq f(S_1)$, where the last inequality follows from monotonicity of $f$, implying that we have equality throughout. Thus, $x(S_1) = f(S_1)$.
\end{proof}

We now give the proof for Theorem \ref{inc-fix-correctness} to show the correctness of the {\sc Inc-Fix} algorithm. 
\begin{proof} Note that since $h(x) = \sum_{e} h_e (x(e))$ is separable and $k$-strongly convex, $\nabla h$ is a strictly increasing function (for each component $e$, $h_e^\prime(x) - h_e^\prime(y)\geq k(x-y)$ for $x>y$). Moreover, $(\nabla h)^{-1}$ is well-defined for all points in $\mathbb{R}^E$ since $\nabla h (\mathcal{D}) = \mathbb{R}^E$. Consider the output of the algorithm $x^*$ and let us partition the elements of the ground set $E$ into $F_1, F_2, \hdots, F_k$ such that $h_e^\prime(x^*(e))  = c_i$ for all $e \in F_i$ and $c_i < c_j$ for $i<j$. We will show that $F_i = M_i \cap T(x^{(i)})$ and that $k$ is the number of iterations of Algorithm \ref{inc-fix}. We first claim that in each iteration $i\geq 1$ of the main loop, the inner loop satisfies the following invariant, as long as the initial starting point $x^{(0)} \in P(f)$:
\begin{enumerate}
\item[(a).] The inner loop returns $x^{(i)}\in P(f)$ such that $h_e^\prime (x^{(i)}(e)) = \max \{h_e^\prime(x^{(i-1)}(e)), \epsilon^{(i)}\}$ for $e \in N_{i-1}$ and some $\epsilon^{(i)} \in \mathbb{R}$, $x^{(i)}(e) = x^{(i-1)}(e)$ for $e \in E \setminus N_{i-1}$, and $T(x^{(i)}) \cap M_i \neq \emptyset$. 
\end{enumerate}
Note that $M$ is initialized to be the set of minimum elements in $N_{i-1}$ with respect to $\nabla h(x^{(i-1)})$ before entering the inner loop. $\epsilon_1$ ensures that the potential increase in $\nabla h (x)$ on elements in $M$ is such that the corresponding point $x \in P(f)$ ($\epsilon_{1}$ exists since $x^{(i-1)} \in P(f)$). $\epsilon_2$ ensures that the potential increase in $\nabla h(x)$ on elements in $M$ is such that $M$ remains the set of minimum weighted elements in $N_{i-1}$. Finally, $x$ is obtained by increasing $\nabla h(x)$ by $\min(\epsilon_1, \epsilon_2)$ and $M$ is updated accordingly. This ensures that at any point in the inner loop, $M = \arg\min_{e \in N_{i-1}} h_e^{\prime}(x(e))$. This continues till there is a tight set $T(x)$ of the current iterate, $x$, that intersects with the minimum weighted elements $M$. Observe that in each iteration of the inner loop either the size of $T(x)$ increases (in the case when $\epsilon_1= \min (\epsilon_1, \epsilon_2)$) or the size of $M$ increases (in the case when $\epsilon_2 = \min(\epsilon_1, \epsilon_2)$). Therefore, the inner loop must terminate. Note that $\epsilon^{(i)} = \min_{e \in N_{i-1}} h_e^\prime(x^{(i)}(e)) = h_f^\prime(x^{(i)}(f))$ for $f \in M_i$, by definition of $M_i$.

Recall that $M_i = \arg \min_{e \in N_{i-1}} (\nabla h(x^{(i)}))_e$ and let the set of elements fixed at the end of each iteration be $L_i = M_i \cap T(x^{(i)})$. We next prove the following claims at the end of each iteration $i \geq 1$. 

\begin{enumerate}
\item[(b).] $x^{(i-1)}(e) \leq x^{(i)}(e)$ for all $e \in E$, as $\nabla h$ is a strictly increasing function and $\nabla h (x^{(i-1)}) \leq \nabla h(x^{(i)})$ due to claim (a).
\item[(c).] Next, observe that we always decrease the set of non-fixed elements $N_i$, i.e., $N_i \subset N_{i-1}$. This follows since $N_i = N_{i-1} \setminus L_i$ and $\emptyset \neq L_i \subseteq N_{i-1}$ (follows from (a) and definition of $L_i$).
\item[(d).] By construction, the set of elements fixed at the end of each iteration partition $E \setminus N_i$, i.e., $E \setminus N_{i} = L_1 \dot{\cup} \hdots \dot{\cup} L_i$.
\item[(e).] We claim that the set of minimum elements $M_i$ at the end of any iteration $i$ always contains the left-over minimum elements from the previous iteration, i.e., $M_{i-1} \setminus L_{i-1} \subseteq M_i$. This is clear if $L_{i-1} = M_{i-1}$, so consider the case when $L_{i-1} \subset M_{i-1}$. At the beginning of the inner loop of iteration $i$, $M = \arg \min_{e \in N_{i-1}} h_e^\prime(x^{(i-1)}(e)) = M_{i-1} \setminus L_{i-1}$. Subsequently, in the inner loop the set of minimum elements can only increase and thus, $M_i \supseteq M_{i-1} \setminus L_{i-1}$.
\item[(f).] We next show that $\epsilon^{(i-1)} < \epsilon^{(i)}$ for $i \geq 2$. Consider an arbitrary iteration $i \geq 2$. If $L_{i-1} = M_{i-1}$, then $\epsilon^{(i)} = \min_{e \in N_{i-1} = N_{i-2}\setminus L_{i-1}} h_e^\prime(x^{(i)}(e)) {\geq}^{(*)} \min_{e \in N_{i-2} \setminus M_{i-1}} h_e^\prime(x^{(i-1)}(e)) > \min_{e \in M_{i-1}} h_e^\prime(x^{(i)}(e)) = \epsilon^{(i-1)}$ where (*) follows from (b). Otherwise, we have $L_{i-1} \subset M_{i-1}$. This implies that $M_{i-1} \setminus L_{i-1}$ is not tight on $x^{(i-1)}$, and it is in fact equal to $\arg \min_{e \in N_{i-1}} h_e^{\prime} (x^{i-1}(e))$ ($=M$ at the beginning of the inner loop in iteration $i$). As this set is not tight, the gradient value can be strictly increased and therefore $\epsilon^{(i)} > \epsilon^{(i-1)}$.
\end{enumerate}
We next split the proof into two cases (A) and (B), depending on how $x^{(0)}$ is initialized.\\

\noindent
\textbf{(A) Proof for $x^{(0)} = 0$}:
\begin{enumerate}
\item[(g).] We claim that $x^{(i)}(e) = x^{(i-1)}(e)$ for all $e \in N_i \setminus M_i$. This follows since the gradient values, $h_e^{\prime}(x^{i}(e))$, for the edges $e\in N_i \setminus M_i$ are greater than $\epsilon^{(i)}$ and thus remain unchanged from $x^{(i-1)}$ (due to claim (a)).
\item[(h).] We claim that $x^{(i)}(e) = 0$ for $e \in N_{i} \setminus M_i$. Using (e) we get $N_i \setminus M_i = N_{i-1} \setminus M_i \subseteq N_{i-1} \setminus (M_{i-1} \setminus L_i)$. Since $L_i \cap N_{i} = \emptyset$, we get $N_i \setminus M_i \subseteq N_{i-1} \setminus M_{i-1}$.  Now (g) implies that $x^{(i)}(e) = x^{(0)}(e)$ for all $e \in N_i \setminus M_i$.
\item[(i).] We next prove that $1\leq j \leq i$, $x^{(i)}(L_1 \cup \hdots \cup L_j) = f(L_1 \cup \hdots \cup L_j)$. First, for $i=1$ note that $T(x^{(1)})$ can be partitioned into $\{L_1, T(x^{(1)}) \setminus M_1\}$. Since $T(x^{(1)}) \setminus M_1 \subseteq N_1 \setminus M_1$, we get $x^{(1)}(T(x^{(1)}) \setminus M_1) =0$ using (g). Thus, by Lemma \ref{nulledges}, we get that $x^{(1)}$ is tight on $L_1$ as well. 

Next, consider any iteration $i >1$. If $j < i$, then $x^{(j)}(L_1 \cup \hdots \cup L_j) = f(L_1 \cup \hdots \cup L_j)$ by induction. Since $x^{(i)} \geq x^{(j)}$, $x^{(i)}$ must also be tight on $L_1 \cup \hdots \cup L_j$. Note that $T(x^{(i)})$ can be partitioned into $\{\big(T(x^{(i)}) \cap (E \setminus N_{i-1}\big), \big(T(x^{(i)}) \cap (N_{i-1} \setminus M_i)\big), \big(T(x^{(i)}) \cap M_i\big)\} = \{\big(L_1 \cup \hdots \cup L_{i-1}\big), \big(T(x^{(i)}) \cap (N_i \setminus M_i)\big), L_i\}$ using (d). Note that $x^{(i)}$ is zero-value on $N_i \setminus M_i$, from (h). By Lemma (\ref{nulledges}) we get that $x^{(i)}$ is also tight on $\big(L_1 \cup \hdots \cup L_i\big)$. 
\end{enumerate}

Claim (b) implies the termination of the algorithm when for some $t$, $N_{t} = \emptyset$. From claim (d), we have obtained a partition of $E$ into disjoint sets $\{L_1, L_2, \hdots, L_t\}$. From claims (a) and (d), we get $x^{(t)}(e) = \epsilon^{(i)}$ for $e \in L_i$. Claim (f) implies that the partition in the theorem $\{F_1, \hdots, F_k\}$ is identical to the partition obtained via the algorithm $\{L_1, \hdots, L_t\}$ (hence $t = k$). Claim (i) implies that $x^{*} = x^{(t)}$ lies in the face $H_{opt}$ as defined in Theorem \ref{main-optimality}.\\

\noindent
\textbf{(B) Proof for $x^{(0)} \in P(f)$ such that $\nabla h(x^{(0)}) =  c\chi(E)$ for some $c \in \mathbb{R}$}: 

\begin{enumerate}

\item[(g$^\prime$).] We claim that $M_{i} = N_{i-1}$. For iteration $i=1$, $h_e^{\prime}(x^{(1)}(e)) = \epsilon^{(1)}$ for all $e \in E$, since $h_e^{\prime}(x^{(0)}(e)) = c$ for all the edges. Thus, indeed, $M_1 = E = N_0$. For iteration $i>1$, we have $N_{i-1} = N_{i-2} \setminus L_{i-1} = M_{i-1} \setminus L_{i-1}$ (by induction). This implies that $h_e^\prime(x^{(i-1)}(e)) = \epsilon^{(i-1)}$ for all the edges in $N_{i-1}$. Thus, in iteration $i$, all the edges $e$ must again have the same gradient value, $h_e^\prime(x^{(i)}(e))$, due to invariant (a).

\item[(h$^{\prime}$).] We claim that $1\leq j \leq i$, $x^{(i)}(L_1 \cup \hdots \cup L_j) = f(L_1 \cup \hdots \cup L_j)$. First, for iteration $i=1$, since $M_1 = E$ by (g$^\prime$), we have $T(x^{(1)}) = L_1$. So, $x^{(1)}$ is tight on $L_1$.

Next, consider any iteration $i >1$. If $j < i$, then $x^{(j)}(L_1 \cup \hdots \cup L_j) = f(L_1 \cup \hdots \cup L_j)$ by induction. Since $x^{(i)} \geq x^{(j)}$, $x^{(i)}$ must also be tight on $L_1 \cup \hdots \cup L_j$. Note that $T(x^{(i)})$ can be partitioned into $\{\big(T(x^{(i)}) \cap (E \setminus N_{i-1}\big), \big(T(x^{(i)}) \cap (N_{i-1} \setminus M_i)\big), \big(T(x^{(i)}) \cap M_i\big)\} = \{\big(L_1 \cup \hdots \cup L_{i-1}\big), \emptyset, L_i\}$ using (d), the claim follows. 

\end{enumerate}

Claim (b) implies the termination of the algorithm when for some $t$, $N_{t} = \emptyset$. From claim (d), we have obtained a partition of $E$ into disjoint sets $\{L_1, L_2, \hdots, L_t\}$. From claims (a) and (d), we get $x^{(t)}(e) = \epsilon^{(i)}$ for $e \in L_i$. Claim (f) implies that the partition in the theorem $\{F_1, \hdots, F_k\}$ is identical to the partition obtained via the algorithm $\{L_1, \hdots, L_t\}$ (hence $t = k$). Claim (h$^{\prime}$) implies that $x^{*} = x^{(t)}$ lies in the face $H_{opt}$ as defined in Theorem \ref{main-optimality}.
\end{proof}

\subsection{Bregman Projections under the Entropy and Euclidean Mirror Maps}\label{bregman}

We next discuss the application of {\sc Inc-fix} algorithm to two important mirror maps. One feature that is common to both is that the trajectory of $x$ in {\sc Inc-Fix} is piecewise linear, and the main step is find the maximum possible increase of $x$ in a given direction $d$. 

\paragraph{\textbf{Unnormalized entropy mirror map}}
This map is given by $\omega(x) = \sum_{e \in E} x(e) \ln x(e) - \sum_{e \in E} x(e)$ and its divergence is $D_{\omega}(x,y) = \sum_{e \in E} x(e) \ln (x(e)/y(e)) - \sum_{e \in E} x(e) + \sum_{e\in E} y(e)$. Note that $\nabla D_{\omega} (x,y) = \ln (\frac{x}{y})$ for a given $y>0$. Finally, $\nabla D_{\omega}^{-1}(x) = y e^{-x}$. In order to minimize the divergence $D_{\omega}(x,y)$ with respect to a given point $y \in \mathbb{R}_+^E$ using the {\sc Inc-Fix} algorithm, we need to find the maximum possible increase in the gradient $\nabla D_{\omega}$ while remaining in the submodular polytope $P(f)$. In each iteration, this amounts to computing:
\begin{align*}
\epsilon_1 &= \max \{ \delta : (\nabla D_{\omega})^{-1}(\nabla D_{\omega} (x) + \delta \chi(M)) \in P(f)\} = \max \{ \delta : y e^{(\nabla D_{\omega} (x) + \delta \chi(M))} \in P(f)\}\\
& = \max \{ \delta : x + z \in P(f), z(e) = (e^{\delta}-1)x(e) \textrm{ for } e\in M,  z(e) = 0 \textrm{ for } e \notin M\}, 
\end{align*}
for some $M \subseteq E$. For the entropy mirror map, the point to be projected, $y$, must be positive in each coordinate (otherwise the divergence is undefined). We initialize $x^{(0)} = cy \in P(f)$ for a small enough constant $c >0$. Apart from satisfying the conditions of the {\sc Inc-Fix} algorithm, this ensures that there is a well defined direction for increase in each iteration. 

\paragraph{\textbf{Euclidean mirror map}}
The Euclidean mirror map is given by $\omega(x) = \frac{1}{2} ||x||^2$ and its divergence is $D_{\omega}(x,y) = \frac{1}{2} ||x-y||^2$. Here, for a given $y \in \mathbb{R}$, $\nabla D_{\omega}(x,y) = \nabla \omega(x) - \nabla \omega (y) = x - y$. This implies that for any iteration, we need to compute 
\begin{align*}
\epsilon_1 &= \max \{ \delta: (\nabla D_{\omega})^{-1}(\nabla D_{\omega} (x) + \delta \chi(M)) \in P(f)\} = \max \{\delta: (\nabla D_{\omega} (x) + \delta \chi(M)) + y \in P(f)\}\\
& = \max \{\delta: x + \delta \chi(M) \in P(f)\},
\end{align*} 
for some $M \subseteq E$. Notice that in each iteration of both the algorithms for the entropy and the Euclidean mirror maps, we need to find the maximum possible increase to the value on non-tight elements in a fixed given direction $d \geq 0$. For the case of the entropy mirror map, $d_e = x_e$ for $e\in M$ and $d_e=0$ otherwise, while, for the Euclidean mirror map, $d = \chi(M)$ (for current iterate $x \in P(f)$, and corresponding minimum weighted set of edges $M \subseteq E$). 

Let {\sc Line} be the problem of finding the maximum $\delta$ s.t. $x + \delta d \in P(f)$. $P_1$ is equivalent to finding maximum $\delta$ such that $\min_{S \subseteq E} f(S) - (x+\delta d)(S)=0$; this is a parametric submodular minimization problem. For the graphic matroid, feasibility in the forest polytope can be solved by $O(|V|)$ maximum flow problems (\cite{Cunningham1985}, Corollary 51.3a in \cite{Schrijver}), and as result, {\sc Line} can be solved as $O(|V|^2)$ maximum flow problems (by Dinkelbach's discrete Newton method) or  $O(|V|)$ parametric maximum flow problems. 
For general polymatroids over the ground set $E$, the problem {\sc Line} can be solved using Nagano's parametric submodular function minimization (\cite{Nagano2007f}) that requires $O(|E|^6 + \gamma |E|^5)$ running time, where $\gamma$ is the time required by the value oracle of the submodular function. Each of the entropy and the Euclidean mirror maps requires $O(|E|)$ ($O(|V|)$ for the graphic matroid) such computations to compute a projection, since each iteration of the {\sc Inc-Fix} algorithm at least one non-tight edge becomes tight. Thus, for the graphic matroid we can compute Bregman projections in $O(|V|^4|E|)$ time (using Orlin's $O(|V||E|)$ algorithm (\cite{Orlin2013}) for computing the maximum flow) and for general polymatroids the running time is $O(|E|^7 + \gamma |E|^6)$ where $|E|$ is the size of the ground set. For cardinality-based submodular functions\footnote{A submodular function is cardinality-based if $f(S) = g(|S|)$ for all $S \subseteq E$ and some concave $g: \mathbb{N} \rightarrow \mathbb{R}$.}, we can prove a better bound on the running time. 
 
\begin{lemma}\label{cardinality} The {\sc Inc-Fix} algorithm takes $O(|E|^2)$ time to compute projections over base polytopes of polymatroids when the corresponding submodular function is cardinality-based. 
\end{lemma}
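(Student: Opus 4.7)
The plan is to exploit the symmetry of cardinality-based submodular functions $f(S) = g(|S|)$ with $g$ concave, and show that each of the main subroutines of {\sc Inc-Fix} admits an $O(|E|)$ implementation for this special class, with at most $O(|E|)$ outer iterations.

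The key structural observation is that, by concavity of $g$, among all subsets of a given cardinality $k$ the inequality $x(S) \le g(k)$ is most tightly satisfied by the set of $k$ coordinates of $x$ with the largest values. Consequently, for any $x \in P(f)$ and any non-negative direction $d$, the feasibility test $x + \delta d \in P(f)$ reduces to the $|E|$ prefix inequalities $\sum_{j=1}^{k}(x+\delta d)_{\sigma(j)} \le g(k)$, where $\sigma$ is a decreasing sort of $x+\delta d$. In particular, $\epsilon_1$ is the minimum of $(g(k)-x(T_k))/\bigl(\sum_{e\in T_k} d_e\bigr)$ over $k$ (restricting to $k$ where the denominator is positive), an $O(|E|)$ linear scan once the sort order is known. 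Similarly, $\epsilon_2$ reduces to identifying the next-smallest gradient value on $E\setminus M$, which is $O(|E|)$ by a single scan.

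I would next argue that the relevant sorted order is determined once at the beginning and preserved throughout the algorithm. By a standard exchange argument using the symmetry of $B(f)$ under coordinate permutations and the separable strong convexity of $h$, the optimum $x^*$ orders its coordinates consistently with the point $y$ being projected. Moreover, the monotone evolution of $x$ guaranteed by claim (b) in the proof of Theorem~\ref{inc-fix-correctness}, together with the specific form of the directions $d=\chi(M)$ (Euclidean) and $d_e = x_e\cdot \mathbb{1}[e\in M]$ (entropy), preserves the order among currently unfixed coordinates. Thus an initial $O(|E|\log|E|)$ sort of $y$ suffices for the remainder of the execution. Counting subproblems, the main loop runs at most $|E|$ times since each outer iteration fixes at least one element, and within each outer iteration the inner loop performs at most $O(|N_{i-1}|)=O(|E|)$ iterations because each inner step strictly grows $T(x)$ or $M$ within that outer iteration. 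Using the sorted structure so that each $\epsilon_1$ and $\epsilon_2$ evaluation is $O(|E|)$ and reusing prefix sums across consecutive inner steps, the per-outer-iteration cost is $O(|E|)$, giving total time $O(|E|) \cdot O(|E|) = O(|E|^2)$.

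The main obstacle is showing rigorously that the "prefix of the sorted order" characterization of binding tight sets remains correct throughout {\sc Inc-Fix} and, in particular, amortizing the linear-scan subroutines across the inner loop so that the per-outer-iteration cost stays at $O(|E|)$ rather than $O(|E|^2)$. This requires a careful handling of ties among coordinates currently in $M$ and a reuse of the prefix sums of $x$ maintained in sorted order. Once these invariants are established, the $O(|E|^2)$ bound follows immediately from the counting argument above, matching the running time of the specialized algorithm of Suehiro et al.
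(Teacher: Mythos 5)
Your proposal is correct and takes essentially the same route as the paper's proof: feasibility for cardinality-based $f$ reduces to prefix inequalities in a decreasing sort of the coordinates, the sort order is preserved throughout because the Euclidean and entropy directions increase the unfixed coordinates uniformly (resp.\ proportionally) while staying below the minimum tight element, and each {\sc Line}/$\epsilon_1$ computation then becomes an $O(|E|)$ scan of prefix ratios after a single initial $O(|E|\log|E|)$ sort. The one point where you overcomplicate is the accounting: no per-outer-iteration amortization of prefix sums is needed, since the total number of inner-loop steps (hence {\sc Line} evaluations) over the whole run is $O(|E|)$ — each inner step enlarges $T(x)$ or $M$, and $M$ persists across outer iterations up to removal of fixed elements (claim (e)) — which immediately yields the $O(|E|^2)$ bound.
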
 
\begin{proof} Let $f$ be a cardinality-based submodular function such that $f(S) = g(|S|)$ for all $S \subseteq E$ and some concave $g: \mathbb{N} \rightarrow \mathbb{R}$. Let $T(x)$ denote the maximal tight set of $x \in P(f)$. Let {\sc Line} be the problem to find $\lambda^* = \max \{\lambda: x + \lambda z \in P(f)\}$ where $x \in P(f), z \in \mathbb{R}_+^{E}$. Let $S(x)$ be a sorted sequence of components of $x$ in decreasing order. We will show that for both the mirror maps, we can solve {\sc Line} in $O(|E|)$ time.

\begin{itemize}
\item[(i)] We first claim that $(x + \lambda^*y)(e) \leq \min_{e^\prime \in T(x)} x(e^\prime)$ for all $e \in E \setminus T(x)$. Let $e^* \in \arg \min_{e^\prime \in T(x)} x(e^\prime)$. The claim holds since otherwise the submodular constraint on the set $T(x) \setminus \{e^*\} \cup \{e^\prime\}$ (which is of the same cardinality as $T(x)$) will be violated. 

\item[(ii)] Let $S(x) = \{x(e_1), x(e_2), \hdots, x(e_m)\}$ such that $x(e_i) \geq x(e_j)$ for $1\leq i < j \leq m$. Then, to check if a vector $x$ is in $P(f)$, one can simply check if $\sum_{i=1}^k x(e_i) \leq g(k)$ for each $k = 1, \hdots, m$ as the submodular function $f$ is cardinality-based. 

\item[(iii)] We next claim that for each of the Euclidean and entropy mirror maps, the ordering of the elements remains the same in subsequent iterations, i.e., at the end of each iteration $i$, the ordering $S(x^{(i)}) = S(x^{(i-1)})$ up to equal elements. 

\begin{itemize}
\item[(A)] For the Euclidean mirror map, with $x^{(0)} = 0$: We first claim that $S(y) = S(x^{(0)})$ up to equal elements; this holds since $x^{(0)}=0$. Next, for each iteration $i$ we claim that $S(x^{(i-1)}) = S(x^{(i-1)} + \lambda z)$ up to equal elements for $\lambda>0$. For the Euclidean mirror map, $z = \chi(M)$ for $M = \arg \min_{e \in N_{i-1}} (x^{(i-1)}(e) - y(e))$ (or for some intermediate iterate in the inner loop). Let $D = \{e \in N_{i-1}: x^{(i-1)}(e) > 0\}$. Note that $D \subseteq M$  using claim (e) of the proof for Theorem \ref{inc-fix-correctness}. As all elements of $D$ are increased by the same amount $\lambda$, while being bound by the value of minimum element in $T(x)$, their ordering remains the same as $S(x^{(i-1)})$ after the increase. Further the zero-elements in $M$, i.e., $M \setminus D$ have the highest value of $y(e)$ among zero-elements of $x^{(i-1)}$. Therefore, increasing these uniformly respects the ordering with respect to $S(y)$, while being less than the value of elements in $D$.


\item[(B)] For the entropy mirror map, with $x^{(0)} = \epsilon y \in P(f)$:

For each iteration $i$, we claim that $S(x^{(i-1)}) = S(x^{(i-1)} + \lambda z)$ up to equal elements, for some $\lambda>0$. For the entropy mirror map, the direction $z$ is given by $z(e) = x^{(i)}(e)$ for $e \in N_{i-1} = E \setminus T(x)$ (see claim $(g^\prime)$ in the proof for Theorem \ref{inc-fix-correctness}) and $z(e) = 0$ for $e \in T(x)$. Since we increase $x^{(i-1)}$ proportionally to $x^{(i-1)}(e)$ on all elements in $N_{i-1}$, while being bound by the value of the minimum element in $T(x)$, the ordering of the elements $S(x^{(i-1)} + \lambda z)$ is the same as $S(x^{(i-1)})$ (up to equal elements). 
\end{itemize}

\item[(iv)] Since the ordering of the elements remains the same after an increase, we get an easy way to solve the problem {\sc LINE}. For each $k = |T(x)|+1, \hdots, |E|$, we can compute the maximum possible increase possible without violating a set of cardinality $k$ which is given by $t_k = \frac{g(k) - \sum_{i=1}^{k} x_i}{\sum_{i=|T(x)|}^k z(i)}$. Then $\lambda^* = \min_{k} t_k$ and this can be checked in $O(|E|)$ time. 
\end{itemize}

Hence, we require a single sort at the beginning of the {\sc Inc-Fix} algorithm ($O(|E| \ln |E|)$ time), and using this ordering (that does not change in subsequent iterations) we can perform {\sc Line} in $O(|E|)$ time. Therefore, the running time of {\sc Inc-Fix} for cardinality-based submodular functions is $O(|E|^2)$. 
\end{proof}

\paragraph{\textbf{Computing Nash-equilibria}} Let us now consider the MSP game over strategy polytopes $P$ and $Q$ under a bilinear loss function, such that $P$ is the base polytope of a matroid (and $Q$ is any polytope that one can optimize linear functions over). The online mirror descent algorithm starts with $x^{(0)}$ being the $\omega-$center of the base polytope, that is simply obtained by projecting a vector of ones on the base polytope. Each subsequent iteration $x^{(t)}$ is obtained by projecting an appropriate point $y^{(t)}$ under the Bregman projection of the mirror map. For the entropy map, $y^{(t)} = [x^{(t-1)}_1e^{- \eta \nabla l^{(t-1)}_1}; \hdots; x^{(t-1)}_me^{- \eta \nabla l^{(t-1)}_m}]$, and for the Euclidean map $y^{(t)}$ is simply given by $y^{(t)} = [x^{(t-1)}_1 - \eta \nabla l^{(t-1)}_1; \hdots; x^{(t-1)}_m - \eta \nabla l^{(t-1)}_m]$. Here, $\nabla l^{(t)} = Lv^{(t)}$ where $v^{(t)} = \arg\max_{z \in Q} x^{(t)T}L z$. Note that for the entropy map, each $y^{(t)}$ (that we project) is guaranteed to be strictly greater than zero since $x^{(0)}>0$. Assuming the loss functions are $G$-Lipschitz under the appropriate norm (i.e., $L_1$-norm for the entropy mirror map, and $L_2$-norm for the Euclidean mirror map), after $T = O(G^2R^2/\epsilon^2)$ iterations of the mirror descent algorithm, we obtain a $\epsilon-$approximate Nash-equilibrium of the MSP game given by $(\sum_{i=1}^T x^{(i)}/T, \sum_{i=1}^T v^{(i)}/T)$. For the entropy map, $R^2 \leq r(E) \ln (m)$ and for the Euclidean mirror map, $R^2 \leq r(E)$. Using the Euclidean mirror map (as opposed to the entropy map) even though we reduce the $R^2$ term in the number of iterations, the Lipschitz constant might be greater with respect to the $L_2$-norm (as opposed to the $L_1$-norm). For example, suppose in a spanning tree game the loss matrix $L$ is scaled such that $||L||_{\infty} \leq 1$. Then, the loss functions are such that $G_{\mathrm{entropy}} = ||\nabla l^{(i)}||_\infty = ||L v^{(i)}||_{\infty} \leq n$ and $G_{\mathrm{Euc}} = ||\nabla l_i||_2 = ||L v^{(i)}||_{2} \leq n\sqrt{m}$ and so, the online mirror descent algorithm converges to an $\epsilon$-approximate strategy in $O(R^2G^2/\epsilon^2) = O(n^2 \; r(E) \ln m/\epsilon^2) = O(n^3 \ln m/\epsilon^2)$ rounds (of learning) under the entropy mirror map\footnote{Even though this case is identical to the multiplicative weights update algorithm, the general analysis for the mirror descent algorithms gives a better convergence rate with respect to the size of the graph $n, m$ (but the same dependence on $\epsilon$).}, whereas it takes $O(n^3 m /\epsilon^2)$ rounds under the Euclidean map.

\section{The Multiplicative Weights Update Algorithm}\label{mwu}
We now restrict our attention to MSP games over 0/1 strategy polytopes $P$ and $Q$ such that $\mathcal{U} =$ vert$(P) \subseteq \{0,1\}^m$ and $\mathcal{V} =$ vert$(Q) \subseteq \{0,1\}^n$. The vertices of these polytopes constitute the pure strategies of these games (i.e., combinatorial concepts like spanning trees, matchings, k-sets). We review the Multiplicative Weights Update (MWU) algorithm for MSP games over strategy polytopes P and Q. The {\sc MWU} algorithm starts with the uniform distribution over all the vertices, and simulates an iterative procedure where the learner (say player 1) plays a mixed strategy $x^{(t)}$ in each round $t$.  In response the {\sc Oracle} selects the most adversarial loss vector for the learner, i.e., $l^{(t)} = Lv^{(t)}$ where $v^{(t)} = \arg \max_{y \in Q} x^{(t)}Ly$. The learner observes losses for all the pure strategies and incurs loss equal to the expected loss of their mixed strategy. Finally the learner updates their mixed strategy by lowering the weight of each pure strategy $u$ by a factor of $\beta^{u^Tl^{(t)}/F}$ for a fixed constant $0< \beta < 1$ and a factor $F$ that accounts for the magnitude of the losses in each round. That is, for each round $t \geq 1$, the updates in the MWU algorithm are as follows, starting with $w^{(1)}(u)  = 1$ for all $u \in \mathcal{U}$: 
\begin{align*}
x^{(t)} &= \frac{\sum_{u \in \mathcal{U}} w^{(t)}(u) u}{\sum_{u \in \mathcal{U}} w^{(t)}(u)}, v^{(t)} = \arg\max_{y \in Q} x^{(t)T}Ly, 
w^{(t+1)}(u) &= w^{(t)}(u) \beta^{u^{T}Lv^{(t)}/F} \;\; \forall u \in \mathcal{U}.
\end{align*}
Standard analysis of the MWU algorithm shows that an $\epsilon-$approximate Nash-equilibrium can be obtained in $O((\frac{\ln |\mathcal{U}|}{(\epsilon/F)^2})$ rounds in the case of MSP games (see for e.g. \cite{Arora2012}). 
 
Note that in many interesting cases of MSP games, the input of the game is $O(\ln |\mathcal{U}|)$. Even though the MWU algorithm converges in $O(\ln |\mathcal{U}|)$ rounds it requires $O(|\mathcal{U}|)$ updates per round. We will show in the following sections how this algorithm can be simulated in polynomial time (i.e., polynomial in the input of the game). 

\subsection{MWU in Polynomial Time}\label{polynomial-mwu}
We show how to simulate the MWU algorithm in time polynomial in $\ln |\mathcal{U}|$ where $\mathcal{U}$ is the vertex set of the 0/1 polytope $P \subset \mathbb{R}^m$, by the use of \emph{product distributions}. A product distribution $p$ over the set $\mathcal{U}$ is such that $p(u) \propto \prod_{e \in u} \lambda_{e}$ for some vector $\lambda \in \mathbb{R}^m$. We refer to the $\lambda$ vector as the \emph{multiplier vector} of the product distribution. The two key observations here are that \emph{product distributions can be updated efficiently by updating only the multipliers (for bilinear losses)}, and \emph{multiplicative updates on a product distribution results in a product distribution again}. 

To argue that the MWU can work by updating only product distributions, suppose first that in some iteration $t$ of the MWU algorithm, we are given a product distribution $p^{(t)}$ over the vertex set $\mathcal{U}$ implicitly by its multiplier vector $\lambda^{(t)}$, and a loss vector $l^{(t)} \in \mathbb{R}^m$ such that the loss of each vertex $u$ is $u^Tl^{(t)}$. In order to multiplicatively update the probability of each vertex $u$ as $p^{(t+1)}(u) \propto p^{(t)}(u) \beta^{u^Tl^{(t)}}$, note that we can simply update the multipliers with the loss of each component. 
\begin{align}
p^{(t+1)} (u) &\propto p^{(t)}(u) \beta^{u^Tl^{(t)}}\nonumber \propto \left(\prod_{e \in u} \lambda^{(t)} (e)\right) \beta^{u^Tl^{(t)}} \propto \prod_{e \in u} \left(\lambda^{(t)} (e) \beta^{l^{(t)}(e)}\right) && \textrm{ as } u \in \{0,1\}^m.
\end{align}

Hence, the resulting probability distribution $p^{(t+1)}$ is also a product distribution, and we can implicitly represent it in the form of the multipliers $\lambda^{(t+1)}(e) = \lambda^{(t)}(e)\beta^{l^{(t)}(e)} (e \in [m])$ in the next round of the MWU algorithm. 

Suppose we have a {\it generalized counting oracle} \textbf{M} which, given $\lambda\in \mathbb{R}^m_+$,  computes $\sum_{u\in {\mathcal U}}\prod_{e: u_e=1} \lambda(e)$ and also, for any element $f$, computes $\sum_{u\in {\mathcal U}: u_f=1}\prod_{e: u_e=1} \lambda(e)$. Such an oracle can be used to compute  the marginals $x \in P$ corresponding to the product distribution associated with $\lambda$.
Suppose we have also an \emph{adversary oracle} \textbf{R} that computes the worst-case response of the adversary given a marginal point in the learner's strategy polytope, i.e., \textbf{R}$(x) = \arg\max_{v \in \mathcal{V}} x^TLv$ (the losses depend only on marginals of the learner's strategy and not the exact probability distribution). Then, we can exactly simulate the MWU algorithm. We can initialize the multipliers to be $\lambda^{(1)}(e) = 1$ for all $e \in [m]$, thus effectively starting with uniform weights $w^{(1)}$ across all the vertices of the polytope. Given the multipliers $\lambda^{(t)}$ for each round, we can compute the corresponding marginal point $x^{(t)}=\mathbf{M}(\lambda^{(t)})$ and the corresponding loss vector $Lv^{(t)}$ where $v^{(t)} = \mathbf{R}(x^{(t)})$. Finally, we can update the multipliers with the loss in each component, as discussed above. It is easy to see that the standard proofs of convergence of the MWU go through, as we only change the way of updating probability distributions, and we obtain the statement of Theorem \ref{MWU-main-exact}. We assume here that the loss matrix $L \geq 0\footnote{This is however an artificial condition, and the regret bounds change accordingly for general losses.\label{footnote1}}$. The proof is included in Appendix \ref{app:mwu}.

\begin{theorem} \label{MWU-main-exact} Consider an MSP game with strategy polytopes $P$, $Q$ and $loss(x,y) = x^TLy$ for $x \in P, y \in Q$, as defined above. Let $F = \max_{x \in P, y \in Q} x^TLy$ and $\mathcal{U} = \textrm{vert}(P)$. Given two polynomial oracles \textbf{M} and \textbf{R} where \textbf{M}$(\lambda) = x$ is the marginal point corresponding to multipliers $\lambda$, and \textbf{R}$(x) = \arg\max_{y \in Q} x^TLy$, the algorithm {\sc MWU} with product updates gives an $O(\epsilon)-$approximate Nash equilibrium ($\bar{x}, \bar{y}$) = $(\frac{1}{t}\sum_{i=1}^t x^{(i)}, \frac{1}{t}\sum_{i=1}^t v^{(i)})$ in $O(\frac{\ln(|\mathcal{U}|)}{(\epsilon/F)^2})$ rounds and time polynomial in $(n,m)$.
\end{theorem}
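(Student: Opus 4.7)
The plan is to reduce the claim to the classical MWU regret bound by showing that the product-distribution simulation is \emph{exact}, and then to convert the resulting regret guarantee into an approximate Nash equilibrium via the standard averaging argument.

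First, I would verify by induction on $t$ that the distribution $p^{(t)}$ implicitly maintained by the algorithm is exactly the one the vanilla MWU would produce. At $t=1$ we have $\lambda^{(1)}(e)=1$ for all $e$, so $p^{(1)}(u) \propto \prod_{e \in u} 1$ is uniform, matching $w^{(1)}\equiv 1$. The inductive step is precisely the chain of proportionalities already displayed in the text: since $u \in \{0,1\}^m$, $\prod_{e \in u} \bigl(\lambda^{(t)}(e)\beta^{l^{(t)}(e)}\bigr) = \bigl(\prod_{e \in u}\lambda^{(t)}(e)\bigr)\beta^{u^T l^{(t)}}$, so the multiplier update $\lambda^{(t+1)}(e)=\lambda^{(t)}(e)\beta^{l^{(t)}(e)/F}$ reproduces the weight update $w^{(t+1)}(u) = w^{(t)}(u)\beta^{u^T l^{(t)}/F}$. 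Moreover, because losses are bilinear, the per-round expected loss $\mathbb{E}_{u\sim p^{(t)}}[u^T L v^{(t)}] = x^{(t)T} L v^{(t)}$ depends only on the marginal $x^{(t)}=\sum_u p^{(t)}(u)u$, which is exactly what oracle $\mathbf{M}$ returns; similarly $v^{(t)}=\mathbf{R}(x^{(t)})$ is the genuine best response to the mixed strategy $p^{(t)}$.

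Next, I would invoke the standard potential-function analysis of MWU. Defining $\Phi_t=\sum_{u\in \mathcal U} w^{(t)}(u)$, the inequality $\beta^z \le 1-(1-\beta)z$ for $z \in [0,1]$ (applicable since losses are normalized by $F=\max_{x\in P,y\in Q} x^T L y$ and $L\ge 0$) gives
\begin{equation*}
\Phi_{t+1} \;\le\; \Phi_t\Bigl(1-(1-\beta)\,x^{(t)T}l^{(t)}/F\Bigr).
\end{equation*}
Iterating and comparing against the single-expert lower bound $\Phi_{T+1}\ge w^{(T+1)}(u)=\beta^{\sum_t u^T l^{(t)}/F}$ for any $u\in\mathcal U$, and choosing $\beta=1-\epsilon/(2F)$, one recovers the usual regret inequality
\begin{equation*}
\tfrac{1}{T}\sum_{t=1}^{T} x^{(t)T} L v^{(t)} \;\le\; \tfrac{1}{T}\sum_{t=1}^{T} u^T L v^{(t)} + O(\epsilon)
\end{equation*}
for every $u\in\mathcal U$, provided $T = \Omega\bigl(\ln|\mathcal U|/(\epsilon/F)^2\bigr)$. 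Each round uses only one call to $\mathbf M$, one call to $\mathbf R$, and $O(m)$ work to update $\lambda$, so the overall running time is polynomial in $n,m$.

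Finally, I would convert regret into an approximate equilibrium. Setting $\bar x = \tfrac 1T\sum_t x^{(t)}$ and $\bar y = \tfrac 1T\sum_t v^{(t)}$, bilinearity gives $\bar x^T L \bar y = \tfrac 1T\sum_t x^{(t)T} L \bar y$. Using $v^{(t)}=\arg\max_{y\in Q} x^{(t)T}Ly$ we get $\bar x^T L \bar y \le \tfrac 1T \sum_t x^{(t)T} L v^{(t)}$, and by the regret bound applied to the best pure strategy $u^\star=\arg\min_{u\in\mathcal U} u^T L \bar y$ together with linearity over the vertices of $P$,
\begin{equation*}
\tfrac{1}{T}\sum_{t} x^{(t)T}Lv^{(t)} \;\le\; u^{\star T} L \bar y + O(\epsilon) \;=\; \min_{x\in P} x^T L \bar y + O(\epsilon).
\end{equation*}
Hence $\bar x^T L \bar y \le \min_{x\in P} x^T L \bar y + O(\epsilon)$; the symmetric inequality $\bar x^T L \bar y \ge \max_{y\in Q} \bar x^T L y - O(\epsilon)$ follows directly from the definition of $v^{(t)}$ without using MWU, and together these are the defining inequalities of an $O(\epsilon)$-approximate Nash equilibrium. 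The only subtle point — and the main obstacle — is ensuring the potential-function argument and the choice of $\beta$ remain valid when $L$ is allowed to have entries of varying magnitudes; this is handled by the normalization by $F$, which is exactly why the rate has the form $\ln|\mathcal U|/(\epsilon/F)^2$ (the nonnegativity assumption on $L$, flagged in footnote \ref{footnote1}, lets us apply the $\beta^z \le 1-(1-\beta)z$ inequality uniformly).
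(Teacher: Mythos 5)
Your overall route is the same as the paper's: you prove that the multiplier update reproduces the vertex-weight update $w^{(t+1)}(u)=w^{(t)}(u)\beta^{u^TLv^{(t)}/F}$ exactly, run the standard potential-function analysis on $\sum_u w^{(t)}(u)$ (the paper uses $\beta=1/(1+\sqrt{2}\,\epsilon/F)$, you use $\beta=1-\epsilon/(2F)$; both give the same $O(F^2\ln|\mathcal U|/\epsilon^2)$ bound), and then convert regret into an approximate equilibrium by averaging, which is the content of the paper's Lemma \ref{regret}.

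There is, however, one flawed justification in your final step. You assert that the second equilibrium inequality, $\bar x^TL\bar y \ge \max_{y\in Q}\bar x^TLy - O(\epsilon)$, ``follows directly from the definition of $v^{(t)}$ without using MWU.'' It does not. The definition of $v^{(t)}$ only yields $\bar x^TLy=\frac{1}{T}\sum_t x^{(t)T}Ly\le \frac{1}{T}\sum_t x^{(t)T}Lv^{(t)}$ for every $y\in Q$; relating this diagonal average back down to $\bar x^TL\bar y$ requires the no-regret property of the $x^{(t)}$ sequence. For an arbitrary sequence of iterates with exact best responses the statement is false: take $P=Q=\Delta_2$, $L=I$, $x^{(1)}=x^{(2)}=(0.9,0.1)$, $x^{(3)}=(0.1,0.9)$, so $v^{(1)}=v^{(2)}=(1,0)$, $v^{(3)}=(0,1)$; then $\bar x^TL\bar y\approx 0.54$ while $\max_y \bar x^TLy\approx 0.63$, i.e.\ the average of best responses to the iterates need not be an approximate best response to the average iterate. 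In the context of the algorithm the inequality is of course true, and the correct derivation is a one-line chain of facts you already have: $\max_{y\in Q}\bar x^TLy\le \frac{1}{T}\sum_t x^{(t)T}Lv^{(t)}\le \min_{x\in P}x^TL\bar y+O(\epsilon)\le \bar x^TL\bar y+O(\epsilon)$, the middle step being the regret bound and the last using $\bar x\in P$; this is exactly how the paper's Lemma \ref{regret} argues (through the game value $\lambda^*$). With that correction your proof is complete and essentially coincides with the paper's.
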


Having shown that the updates to the weights of each pure strategy can be done efficiently, the regret bound follows from known proofs for convergence of the multiplicative weights update algorithm. We would like to draw attention to the fact that, by the use of product distributions, we are not restricting the search of approximate equilibria. This follows from the analysis, and also from the fact that any point in the relative interior of a 0/1 polytope can be viewed as a (max-entropy) product distribution over the vertices of the polytope (\cite{Asadpour}, \cite{Singh2014}). 

\paragraph{\textbf{Approximate Computation}.} If we have an {\it approximate} generalized counting oracle, we would have an approximate marginal oracle \textbf{M}$_{\epsilon_1}$ that computes an estimate of the marginals, i.e. \textbf{M}$_{\epsilon_1}(\lambda) = \tilde{x}$ such that $||M(\lambda) - \tilde{x}||_{\infty} \leq \epsilon_1$ where $M(\lambda)$ is the true marginal point corresponding to $\lambda$, and an approximate adversary oracle \textbf{R}$_{\epsilon_2}$ that computes an estimate of the worst-case response of the adversary given a marginal point in the learner's strategy polytope, i.e. \textbf{R}$_{\epsilon_2}(x) = \tilde{v}$ such that $x^TL \tilde{v} \geq \max_{v \in \mathcal{V}} x^TL v - \epsilon_2$ (for example, in the case when the strategy polytope is not in $P$ and only an FPTAS is available for optimizing linear functions). 
\begin{algorithm}
\caption{The {\sc MWU} algorithm with approximate oracles}
\label{MWU-algorithm-approx}
\Input{\textbf{M}$_{\epsilon_1}$: $\mathbb{R}^m \rightarrow \mathbb{R}^m$, \textbf{R}$_{\epsilon_2}$: $\mathbb{R}^m \rightarrow \mathbb{R}^n$, $\epsilon>0.$}
\Output{$O(\epsilon+F\epsilon_1+\epsilon_2)$-approximate Nash equilibrium $(\bar{x},\bar{y})$}
{$\lambda^{(1)} = \mathbf{1}, t=1, F = \max_{x\in P, y \in Q} x^TLy, \epsilon^\prime=\epsilon/F, \beta = \frac{1}{1+\sqrt{2}\epsilon^\prime}$\;}
 \Repeat{$t < \frac{F^2\ln |U|}{\epsilon^2}$}{
 	$\tilde{x}^{(t)} =$ \textbf{M}$_{\epsilon_1}(\lambda^{(t)})$\;
	$\tilde{v}^{(t)} =$ \textbf{R}$_{\epsilon_2}(\tilde{x}^{(t)})$\;
	$\lambda^{(t+1)}(e) = \lambda^{(t)}(e) * \beta^{L\tilde{v}^{(t)}(e)/F} \;\; \forall e \in E$\;
	$t \leftarrow t+1$\;
}
$(\bar{x}, \bar{y}) = (\frac{1}{t-1}\sum_{i=1}^{t-1} \tilde{x}^{(i)}, \frac{1}{t-1}\sum_{i=1}^{t-1} \tilde{v}^{(i)})$\;
\vspace{0.1cm}
\end{algorithm}
We give a complete description of the algorithm in Algorithm \ref{MWU-algorithm-approx} and the formal statement of the regret bound in the following lemma (for loss matrices $L \geq 0^{\ref{footnote1}}$) (proved in Appendix \ref{app:mwu}). The tricky part in the proof is that since the loss vectors are approximately computed from approximate marginal points, there is a possibility of not converging at all. However, we show that this is not the case since we maintain the true multipliers $\lambda^{(t)}$ in each round. It is not clear if there would be convergence, for example, had we gone back and forth between the marginal point and the product distribution. 

\begin{lemma} \label{MWU-main} Given two polynomial approximate oracles \textbf{M}$_{\epsilon_1}$ and \textbf{R}$_{\epsilon_2}$ where \textbf{M}$_{\epsilon_1}(\lambda) = \tilde{x}$ s.t. $||M(\lambda) - \tilde{x}||_{\infty} \leq \epsilon_1$, and \textbf{R}$_{\epsilon_2}(x) = \tilde{v}$ s.t. $x^TL\tilde{v} \geq \max_{y \in Q} x^TLy - \epsilon_2$, the algorithm {\sc MWU} with product updates gives an $O(\epsilon + F\epsilon_1 + \epsilon_2)-$approximate Nash equilibrium ($\bar{x}, \bar{y}$) = $(\frac{1}{t}\sum_{i=1}^t\tilde{x}^{(i)}, \frac{1}{t}\sum_{i=1}^t\tilde{v}^{(i)})$ in $O(\frac{F^2 \ln(|\mathcal{U}|)}{\epsilon^2})$ rounds and time polynomial in $(n,m)$.
\end{lemma}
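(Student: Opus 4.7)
The plan is to apply the standard potential-function analysis of MWU to the \emph{true} multipliers $\lambda^{(t)}$ rather than to the observed approximate marginals $\tilde x^{(t)}$, and to introduce the two oracle errors only once, at the end of the chain of inequalities. Define $\Phi^{(t)} = \sum_{u \in \mathcal U} w^{(t)}(u)$ with $w^{(t)}(u) = \prod_{e:\, u_e=1}\lambda^{(t)}(e)$. Because Algorithm~\ref{MWU-algorithm-approx} updates the multipliers exactly by $\lambda^{(t+1)}(e)=\lambda^{(t)}(e)\beta^{(L\tilde v^{(t)})(e)/F}$, every pure-strategy weight satisfies the product-form identity $w^{(t+1)}(u)=w^{(t)}(u)\beta^{u^T L\tilde v^{(t)}/F}$ exactly, and the associated true marginal $x^{(t)} := M(\lambda^{(t)})$ is a well-defined quantity on which the potential recursion acts, even though the algorithm never computes $x^{(t)}$.

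Under $L\ge 0$ and the normalization $F=\max_{x,y}x^T L y$, each entry of $L\tilde v^{(t)}/F$ lies in $[0,1]$, so $\beta^z \le 1-(1-\beta)z$ gives $\Phi^{(t+1)}\le\Phi^{(t)}\bigl(1-(1-\beta)\,x^{(t)T}L\tilde v^{(t)}/F\bigr)$, while $\Phi^{(T+1)}\ge w^{(T+1)}(u)=\beta^{\sum_t u^TL\tilde v^{(t)}/F}$ for every $u\in\mathcal U$. Taking logs, using $\ln(1-z)\le -z$, and plugging in $\beta = 1/(1+\sqrt 2\,\epsilon')$ with $\epsilon'=\epsilon/F$ reproduces the exact-oracle calculation behind Theorem~\ref{MWU-main-exact} verbatim (with $\tilde v^{(t)}$ in place of $v^{(t)}$) and yields, for every $u\in\mathcal U$,
\begin{equation*}
\tfrac1T\sum_{t=1}^T x^{(t)T}L\tilde v^{(t)} \;\le\; \tfrac1T\sum_{t=1}^T u^T L\tilde v^{(t)} + O(\epsilon),
\end{equation*}
once $T=\Theta(F^2\ln|\mathcal U|/\epsilon^2)$ absorbs the $F\ln|\mathcal U|/(\epsilon' T)$ tail.

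I would then inject the two oracle errors. For the counting oracle, $\|x^{(t)}-\tilde x^{(t)}\|_\infty\le \epsilon_1$ combined with $\|L\tilde v^{(t)}\|_\infty\le F$ (which follows from $L\ge 0$ and the existence, for each element $e$, of a vertex $u\in\mathcal U$ containing $e$, since then $(L\tilde v^{(t)})_e\le u^T L\tilde v^{(t)}\le F$) gives, through H\"older, an error of order $F\epsilon_1$ per round between $x^{(t)T}L\tilde v^{(t)}$ and the observed $\tilde x^{(t)T}L\tilde v^{(t)}$. For the adversary oracle, $\epsilon_2$-optimality of $\tilde v^{(t)}$ at $\tilde x^{(t)}$ together with bilinearity yields
\begin{equation*}
\max_{y\in Q}\bar x^T L y \;\le\; \tfrac1T\sum_t \max_{y\in Q}\tilde x^{(t)T}L y \;\le\; \tfrac1T\sum_t \tilde x^{(t)T}L\tilde v^{(t)} + \epsilon_2,
\end{equation*}
where $\bar x=\tfrac1T\sum_t\tilde x^{(t)}$ and $\bar y=\tfrac1T\sum_t\tilde v^{(t)}$. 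Since $x\mapsto x^T L\bar y$ is linear, $\min_{x\in P}x^T L\bar y=\min_{u\in\mathcal U}\tfrac1T\sum_t u^T L\tilde v^{(t)}$, and chaining the three displays produces $\max_{y\in Q}\bar x^T L y \le \min_{x\in P} x^T L\bar y + O(\epsilon+F\epsilon_1+\epsilon_2)$. Together with the trivial sandwich $\min_{x\in P} x^T L\bar y \le \bar x^T L\bar y \le \max_{y\in Q} \bar x^T L y$, this is exactly the $O(\epsilon+F\epsilon_1+\epsilon_2)$-approximate Nash statement.

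The main obstacle, flagged in the paragraph preceding the lemma, is precisely what this bookkeeping avoids: the potential and regret analysis \emph{must} be carried out in terms of the pristine marginals $x^{(t)}=M(\lambda^{(t)})$ that the algorithm never actually computes. Had we regenerated $\lambda^{(t+1)}$ from the approximate $\tilde x^{(t)}$---round-tripping through the counting oracle at every step---the $\epsilon_1$ error would enter the potential recursion itself and compound across all $T$ iterations rather than accumulate linearly in $T$, possibly destroying convergence outright. Keeping the multipliers pristine is what forces $F\epsilon_1$ and $\epsilon_2$ to add, rather than multiply, into the final regret bound.
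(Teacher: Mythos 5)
Your proposal is correct and follows essentially the same route as the paper: you keep the true multipliers $\lambda^{(t)}$ pristine, rerun the exact-oracle potential/regret analysis of Theorem \ref{MWU-main-exact} against the approximate responses $\tilde v^{(t)}$ to get a regret bound in terms of the never-computed true marginals $x^{(t)} = M(\lambda^{(t)})$, pay $F\epsilon_1$ per round to pass to $\tilde x^{(t)}$, and then convert regret plus $\epsilon_2$-approximate best responses into an approximate equilibrium, where your final chaining is exactly the paper's Lemma \ref{regret} written out inline. The only differences are cosmetic: the paper invokes Lemma \ref{regret} as a black box, and its per-round bound $\epsilon_1 e^T L\tilde v^{(i)} \le F\epsilon_1$ involves the same degree of looseness as your H\"older step, so there is nothing substantive to change.
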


\paragraph{\bf Applications:} 
For learning over the spanning tree polytope, an exact generalized counting algorithm follows from Kirchoff's  matrix theorem (\cite{lyons2005probability}) that states that the value of $\sum_{T} \prod_{e \in T} \lambda_e$ is equal to the value of the determinant of any cofactor of the weighted Laplacian of the graph. One can use fast Laplacian solvers (see for e.g., \cite{Koutis2010}) for obtaining a fast approximate marginal oracle. Kirchhoff's determinantal formula also extends to (exact) counting of bases of regular matroids. For learning over the bipartite matching polytope (i.e., rankings), one can use the randomized generalized approximate counting oracle from (\cite{Jerrum2004}) for computing permanents to obtain a feasible marginal oracle. Note that the problem of counting the number of perfect matchings in a bipartite graph is \#P-complete as it is equivalent to computing the permanent of a 0/1 matrix (\cite{Valiant1979}). The problem of approximately counting the number of perfect matchings in a general graph is however a long standing open problem, if solved, it would result in another way of solving MSP games on the matching polytope. Another example of a polytope that admits a polynomial approximate counting oracle is the cycle cover polytope (or $0-1$ circulations) over directed graphs (\cite{Singh2014}). Also, we would like to note that to compute Nash-equilibria for MSP games that admit marginal oracles for both the polytopes, the optimistic mirror descent algorithm is simply the exponential weights with a modified loss vector (\cite{Rakhlin2013}), and hence the same framework applies.

\paragraph{\bf Sampling pure strategies:} In online learning scenarios that require the learner to play a combinatorial concept (i.e., a pure strategy in each round), we note first that given any mixed strategy (that lies in a strategy polytope $\in \mathbb{R}^n$), the learner can obtain a convex decomposition of the mixed strategy into at most $n+1$ vertices by using the well-known Caratheodory's Theorem. The learner can then play a pure strategy sampled proportional to the convex coefficients in the decomposition. In the case of learning over the spanning tree and bipartite perfect matching polytopes using product distributions however, there exists a more efficient way of sampling due to the \emph{self-reducibility}\footnote{Intuitively, self-reducibility means that there exists an inductive construction of the combinatorial object from a smaller instance of the same problem. For example, conditioned on whether an edge is taken or not, the problem of finding a spanning tree (or a matching) on a given graph reduces to the problem of finding a spanning tree (or a matching) in a modified graph.} of the these polytopes (\cite{Kulkarni1990}, \cite{Asadpour}): Order the edges of the graph randomly and decide for each probabilistically whether to use it in the final object or not. The probabilities of each edge are updated after every iteration conditioned on the decisions (i.e., to include or not) made on the earlier edges. This sampling procedure works as long as there exists a polynomial time marginal oracle (i.e., a generalized counting oracle) to update the probabilities of the elements of the ground set after each iteration \emph{and} if the polytope is \emph{self-reducible} (\cite{Sinclair1989}). Intuitively, self-reducibility means that there exists an inductive construction of the combinatorial object from a smaller instance of the same problem. For example, conditioned on whether an edge is taken or not, the problem of finding a spanning tree (or a matching) on a given graph reduces to the problem of finding a spanning tree (or a matching) in a modified graph. 

\section{Symmetric Nash-equilibria}\label{sne}
In this section we explore purely combinatorial algorithms to find Nash-equilibria, without using learning algorithms. Symmetric Nash-equilibria are a set of optimal strategies such that both players play the exact same mixed strategy at equilibrium. We assume here that the strategy polytopes of the two players are the same. In this section, we give necessary and sufficient conditions for a symmetric Nash-equilibrium to exist in case of matroid MSP games. More precisely, our main result is the following:

\begin{theorem} \label{symmetric-general} Consider an MSP game with respect to a matroid $M = (E, \mathcal{I})$ with an associated rank function $r: E \rightarrow \mathbb{R}_+$. Let $L$ be the loss matrix for the row player such that it is symmetric, i.e. $L^T = L$. Let $x \in B(M) = \{x \in \mathbb{R}^\mathrm{E}: x(S) \leq r(S) \;\forall\; S \subset E, x(E) = r(E), x \geq 0\}$. Suppose $x$ partitions the elements of the ground set into $\{P_1, P_2, \hdots P_k\}$ such that $(Lx)(e) = c_i \;\forall e \in P_i$ and $c_1 < c_2 \hdots < c_k$. Then, the following are equivalent. 
\begin{enumerate}
\item [(i).] $(x,x)$ is a symmetric Nash-equilibrium, 
\item [(ii).] All bases of matroid $M$ have the same cost with respect to weights $Lx$,
\item [(iii).] For all bases $B$ of $M$, $|B \cap P_i| = r(P_i)$,
\item [(iv).] $x(P_i) = r(P_i)$ for all $i \in \{1, \hdots, k\}$,
\item [(v).] For all circuits $C$ of $M$, $\exists i : C \subseteq P_i$. 
\end{enumerate}
\end{theorem}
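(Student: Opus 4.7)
My plan is to establish the equivalences via the cycle $(i) \Leftrightarrow (ii) \Leftrightarrow (v) \Leftrightarrow (iv) \Leftrightarrow (iii) \Rightarrow (ii)$, which factors the argument into five short, mostly polyhedral/matroid-theoretic steps.

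First, $(i) \Leftrightarrow (ii)$ is essentially just unpacking the definition of a symmetric Nash equilibrium under a symmetric bilinear loss. Since $L^T = L$, the condition that $(x,x)$ is a Nash equilibrium rewrites as $(Lx)^T y \le (Lx)^T x \le (Lx)^T z$ for all $y,z \in B(M)$, i.e. the linear function $z \mapsto (Lx)^T z$ attains both its maximum and its minimum over $B(M)$ at the interior point $x$. Because $B(M)$ is a polytope, this is equivalent to that linear function being constant on $B(M)$, which is the same as saying all bases of $M$ carry the same cost w.r.t.\ $Lx$; this is $(ii)$.

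Next, $(ii) \Leftrightarrow (v)$ is the key polyhedral/matroid step. I would invoke the standard fact that two bases of a matroid are adjacent in $B(M)$ iff they differ by a single exchange $B' = B - e + f$, where $e,f$ lie together in a fundamental circuit, and that the edges of $B(M)$ are exactly the directions $\chi_f - \chi_e$ for such exchanges. Thus a linear function $w^T z$ is constant on all bases iff $w(e) = w(f)$ for every pair of elements lying in a common circuit, iff $w$ is constant on every circuit of $M$. Specializing to $w = Lx$, whose level sets are exactly $P_1,\dots,P_k$ with \emph{distinct} values $c_1 < \dots < c_k$, constancy on a circuit $C$ is equivalent to $C \subseteq P_i$ for some $i$, giving $(v)$.

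For $(v) \Rightarrow (iv) \Rightarrow (iii) \Rightarrow (ii)$, I would argue as follows. From $(v)$, no circuit crosses parts, so $M$ decomposes as the matroid direct sum $\bigoplus_i M|_{P_i}$; in particular $r(E) = \sum_i r(P_i)$. Combined with $x(E) = r(E)$ and the submodular constraints $x(P_i) \le r(P_i)$, summing forces $x(P_i) = r(P_i)$ for all $i$, which is $(iv)$. From $(iv)$ we recover $\sum_i r(P_i) = x(E) = r(E)$; then for any basis $B$, the trivial bound $|B \cap P_i| \le r(P_i)$ combined with $\sum_i |B \cap P_i| = r(E) = \sum_i r(P_i)$ forces equality in each coordinate, proving $(iii)$. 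Finally, assuming $(iii)$, the cost $\sum_i c_i |B \cap P_i| = \sum_i c_i\, r(P_i)$ is manifestly independent of the basis $B$, giving $(ii)$ and closing the cycle.

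The main obstacle I anticipate is making the $(ii) \Leftrightarrow (v)$ step watertight, since it relies on the precise edge structure of the base polytope and on the fact that every pair of elements in a common circuit can be realized by a basis-exchange edge of $B(M)$; I would cite Edmonds' characterization of adjacency in matroid base polytopes for this. The other steps are essentially bookkeeping with the direct-sum decomposition and the tight-set lattice structure for $x \in B(M)$.
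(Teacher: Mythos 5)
Your proposal is correct, but it routes several of the equivalences differently from the paper. The paper proves (i)$\Leftrightarrow$(ii) essentially as you do (using $L^T=L$ so that the max and min of $z\mapsto (Lx)^Tz$ over $B(M)$ coincide), but then establishes (ii)$\Leftrightarrow$(iii) by an explicit exchange argument (a base deficient on some $P_i$ is augmented from a base tight on $P_i$ and repaired, yielding a base of different cost), (iii)$\Leftrightarrow$(iv) by writing $x$ as a convex combination of bases in one direction and the same counting you use in the other, and (iii)$\Leftrightarrow$(v) via fundamental circuits. You instead prove (ii)$\Leftrightarrow$(v) directly, using that any pair $e,f$ lying on a common circuit can be realized as a basis exchange and that the basis-exchange graph (the $1$-skeleton of $B(M)$) is connected, and you obtain (v)$\Rightarrow$(iv) from the rank additivity $r(E)=\sum_i r(P_i)$ forced by the direct-sum decomposition, thereby avoiding the convex decomposition of $x$ altogether; your (iv)$\Rightarrow$(iii)$\Rightarrow$(ii) coincides with the paper. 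Your route concentrates the matroid content into two standard, citable facts (circuit pairs are exchange pairs; adjacency/connectivity of the exchange graph), while the paper's argument is more self-contained, relying only on basic exchange properties and the convex-hull description of $B(M)$. One small inaccuracy: in (i)$\Leftrightarrow$(ii) you call $x$ an ``interior point,'' which is neither guaranteed by the hypotheses nor needed --- all that is used is that the linear functional attains both its maximum and its minimum over $B(M)$ at the single point $x\in B(M)$, which already forces it to be constant on the polytope and hence on all bases.
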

\begin{proof}
\noindent
Case (i) $\Leftrightarrow$ (ii). Assume first that $(x,x)$ is a symmetric Nash-equilibrium. Then, the value of the game is $\max_{z \in B(M)} x^TLz = \min_{z \in B(M)} z^TLx = \min_{z \in B(M)} x^TL^Tz \overset{(1)}{=} \min_{z \in B(M)} x^TLz$, where (1) follows from $L^T=L$. This implies that every base of the matroid has the same cost under the weights $Lx$.\\
Coversely, if every base has the same cost with respect to weights $Lx$, then $x \in \arg\max_{y \in B(M)} x^TLy$ and $x \in \arg\min_{y \in B(M)} x^TLy$. Since no player has an incentive to deviate, this implies that $(x,x)$ is a Nash-equilibrium. 
\newline\newline
Case (ii) $\Leftrightarrow$ (iii). Assume (ii) holds. Suppose there exists a base $B$ such that $|B \cap P_i| < r(P_i)$ for some $i$. We know that there exists a base $B^{T}$ such that $|B^{T} \cap P_i |= r(P_i)$. Since $B \cap P_i, B^{T} \cap P_i \in \mathcal{I}$ and $|B^{T} \cap P_i| > |B \cap P_i|$, $\exists e \in (B^{T} \setminus B) \cap P_i$ such that $(B \cap P_i) + e \in \mathcal{I}$. Since $(B \cap P_i) + e \in \mathcal{I}$ and $B$ is a base, $\exists f \in B \setminus P_i$ such that $B + e - f \in \mathcal{I}$. This gives a base of a different cost as $e$ and $f$ are in different members of the partition. Hence, we reach a contradiction. Thus, (ii) implies (iii).\\
Conversely, assume (iii) holds. Note that the cost of a base B is $c(B) = \sum_{i=1}^k c_i|P_i \cap B|$. Thus, (iii) implies that every base has the same cost $\sum_{i=1}^k c_i r(P_i)$.
\newline\newline
Case (iii) $\Leftrightarrow$ (iv). Assume (iii) holds. Since $x \in B(M)$, $x$ is a convex combination of the bases of the matroid, i.e. $x = \sum_{B} \l_B \chi(B)$ where $\chi(B)$ denotes the characteristic vector for the base $B$. Thus, (iii) implies that $x(P_i) = \sum_B \lambda_B |B \cap P_i| = \sum_{B} \lambda_B r(P_i) = r(P_i)$ for all $i \in \{1, \hdots, k\}$.\\
Conversely assume (iv) and consider any base B of the matroid. Then, $r(E) = |B| = \sum_{i=1}^k |B \cap P_i| {\leq}^{(1)} \sum_{i=1}^k r(P_i)$ $\overset{(2)}{=} \sum_{i=1}^k x(P_i) = x(E) = r(E)$, where (1) follows from rank inequality and (2) follows from (iv) for each $P_i$. Thus, equality holds in (1) and we get that for each base $B$, $|B \cap P_i| = r(P_i)$.
\newline\newline
Case (iii) $\Leftrightarrow$ (v). Assume (iii) and let $C$ be a circuit. Let $e \in C$ and $B$ be a base that contains $C - e$. Hence, the unique circuit in $B + e$ is $C$. Thus, for any element $f \in C -e$, $B - e + f \in \mathcal{I}$. Hence, (iii) implies that all the elements of $C - e$ must lie in the same member of the partition as $e$ does. Hence, $\exists i: C \subseteq P_i$.\\
Conversely, assume (v). Consider any two bases $B$ and $B^T$ such that $B \setminus B^T = \{e\}$ and $B^T - B = \{f\}$ for some $e,f \in E$. Let $C$ be the unique circuit in $B^T + e$ and hence $f \in C$. It follows from (v) that $e,f$ are in the same member of the partition, and hence $|B \cap P_i| = |B^T \cap P_i|$ for all $i \in \{1,\hdots, k\}$. Since we know there exists a base $B_i$ such that $|B_i \cap P_i| = r(P_i)$ for each $i$, hence all bases must have the same intersection with each $P_i$ and (iii) follows. 
\end{proof}
\begin{corollary} Consider a game where each player plays a base of the graphic matroid $M(G)$ on a graph $G$, and the loss matrix of the row player is the identity matrix $I \in \mathbb{R}^{\mathrm{E} \times \mathrm{E}}$. Then there exists a symmetric Nash-equilibrium \emph{if and only if} every block of $G$ is uniformly dense. 
\end{corollary}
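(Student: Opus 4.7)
The plan is to apply Theorem~\ref{symmetric-general} with $L=I$, so that the partition $\{P_1,\ldots,P_k\}$ guaranteed by the theorem is simply the partition of $E$ according to the distinct values taken by $x$. Throughout, I will use that the graphic matroid of $G$ decomposes as a direct sum $M(G) = \bigoplus_j M(B_j)$ over the blocks $B_1,\ldots,B_\ell$ of $G$, so in particular $r(E) = \sum_j r(B_j)$ and $x \in B(M(G))$ projects to an element of $B(M(B_j))$ on each $E(B_j)$.

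For the ``only if'' direction, suppose $(x,x)$ is a symmetric Nash-equilibrium. Condition (v) of Theorem~\ref{symmetric-general} forces every cycle of $G$ to lie inside a single part $P_i$. Since every block $B_j$ with at least two edges is $2$-connected, any two edges of $B_j$ lie on a common cycle entirely inside $B_j$; hence all edges of $B_j$ belong to one part $P_i$, and $x$ is constant on $E(B_j)$ with some value $c_{i}$. (Bridge blocks are handled trivially.) Using $x(E)=r(E)=\sum_j r(B_j)$ together with the block inequalities $x(E(B_j)) \le r(B_j)$, I conclude $x(E(B_j)) = r(B_j)$, and therefore $x_e = r(B_j)/|E(B_j)|$ on each edge $e\in E(B_j)$. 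Finally, for every $S\subseteq E(B_j)$ with $r(S)\ge 1$, the base polytope inequality $x(S)\le r(S)$ rewrites as $|S|/r(S) \le |E(B_j)|/r(B_j)$, which is exactly uniform density of $B_j$.

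For the ``if'' direction, assume every block $B_j$ is uniformly dense and define $x \in \mathbb{R}^E$ by $x_e := r(B_j)/|E(B_j)|$ for $e \in E(B_j)$. Uniform density of $B_j$ gives $x(S) \le r(S)$ for all $S \subseteq E(B_j)$, and $x(E(B_j)) = r(B_j)$ by construction, so $x|_{E(B_j)} \in B(M(B_j))$; by the direct-sum decomposition, $x \in B(M(G))$. The partition $\{P_1,\ldots,P_k\}$ determined by the distinct values $r(B_j)/|E(B_j)|$ then satisfies $x(P_i) = \sum_{B_j \subseteq P_i} r(B_j) = r(P_i)$ (using that $r$ is additive across blocks), which is condition (iv) of Theorem~\ref{symmetric-general}. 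Thus $(x,x)$ is a symmetric Nash-equilibrium.

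The main technical point to get right is the forward direction: showing that the partition from Theorem~\ref{symmetric-general} must refine into unions of blocks. The key lemma is the standard fact that in a $2$-connected graph any two edges lie on a common cycle, which combined with condition (v) pins all edges of a block into a single $P_i$; after that, the block-wise tightness of $x$ follows cleanly from the additivity of rank across blocks.
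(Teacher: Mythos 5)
Your proof is correct and follows essentially the same route as the paper's: apply Theorem~\ref{symmetric-general} with $L=I$, use condition (v) to show the parts $P_i$ are unions of blocks, and use tightness of $x$ to extract uniform density. Your version is in fact more complete than the paper's sketch, since you justify the block structure via the common-cycle fact for $2$-connected graphs, obtain block-level tightness cleanly from rank additivity across blocks (the paper's ``hence each block contained in $P_i$ is uniformly dense'' silently needs such an argument), and you prove the converse direction explicitly, which the paper omits.
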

\begin{proof} Since the loss matrix is the identity matrix, $x(e) = c_i$ for all $e \in P_i$. Theorem \ref{symmetric-general} (v) implies that each $P_i$ is a union of blocks of the graph. Further, as $x(P_i) = r(P_i) = c_i|P_i|$, each $P_i$ (and hence each block contained in $P_i$) is uniformly dense. 
\end{proof}

\begin{corollary} Given any point $x \in \mathbb{R}^\mathbb{E}, x>0$ in the base polytope of a matroid $M = (E, \mathcal{I})$, one can construct a matroid game for which $(x,x)$ is the symmetric Nash equilibrium.
\end{corollary}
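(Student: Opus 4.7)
The plan is to exhibit a symmetric loss matrix $L$ making $Lx$ a constant vector, so that the partition $\{P_1,\dots,P_k\}$ arising in Theorem \ref{symmetric-general} collapses to the trivial one-part partition $\{E\}$ with $k=1$. In that case, condition (iv) of Theorem \ref{symmetric-general} reduces to $x(E)=r(E)$, which holds automatically because $x\in B(M)$, and the equivalence (i)$\Leftrightarrow$(iv) immediately yields that $(x,x)$ is a symmetric Nash equilibrium. Notice that the strict inequalities $c_1<c_2<\cdots<c_k$ in Theorem \ref{symmetric-general} are vacuous when $k=1$, so no additional conditions need to be verified.

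The positivity assumption $x>0$ is what makes this construction simple: I would take $L$ to be the diagonal matrix with entries $L_{ee} = 1/x(e)$, which is well defined precisely because $x(e)>0$ for every $e$. This $L$ is symmetric (indeed diagonal and positive definite), and $(Lx)(e) = (1/x(e))\cdot x(e) = 1$ for every $e\in E$, so $Lx = \mathbf{1}$. Applying Theorem \ref{symmetric-general} with partition $\{E\}$, $c_1 = 1$, and the identity $x(E) = r(E)$ coming from $x\in B(M)$, we immediately obtain that $(x,x)$ is an SNE of the matroid game with loss matrix $L$.

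There is no real obstacle in carrying this out; the work is all in recognising that the degenerate case $k=1$ of Theorem \ref{symmetric-general} already suffices. One could alternatively take $L = \mathbf{1}\mathbf{1}^T$, which likewise produces a constant $Lx = r(E)\mathbf{1}$ without using $x>0$; the diagonal choice above has the added virtue of being positive definite, which (in conjunction with the uniqueness-of-SNE result for PD loss matrices advertised earlier in the paper) identifies $(x,x)$ as the \emph{unique} symmetric Nash equilibrium of the constructed game.
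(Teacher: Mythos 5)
Your proposal is correct and is essentially the paper's own proof: the paper also takes the diagonal loss matrix $L_{e,e}=1/x_e$, observes $Lx=\mathbf{1}$, and invokes Theorem \ref{symmetric-general} (verifying condition (ii) rather than your (iv), a negligible difference since with constant $Lx$ both are immediate). Your extra remarks on positive definiteness and uniqueness are a nice bonus but not part of the statement being proved.
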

\begin{proof} Let the loss matrix $L$ be defined as $L_{e,e} = 1/x_{e}$ for $e \in E$ and 0 otherwise. Then, $Lx(e) = 1$ for all $e \in E$. Thus, all the bases have the same cost under $Lx$. It follows from Theorem \ref{symmetric-general} that $(x,x)$ is a symmetric Nash equilibrium of this game.  
\end{proof}

\noindent \textbf{Uniqueness: } Consider a symmetric loss matrix $L$ such that $L_{e,f} = 1$ for all $e,f\in E$. Note that any feasible point in the base polytope $B(M)$ forms a symmetric Nash equilibrium. Hence, we need a stronger condition for the symmetric Nash equilibria to be unique. We note that for positive and negative-definite loss matrices, symmetric Nash-equilibria are unique, if they exist (proof in the Appendix).  
\begin{theorem} \label{unique} Consider the game with respect to a matroid $M = (E, \mathcal{I})$ with an associated rank function $r: E \rightarrow \mathbb{R}_+$. Let $L$ be the loss matrix for the row player such that it is positive-definite, i.e. $x^TLx > 0$ for all $x \neq 0$. Then, if there exists a symmetric Nash equilibrium of the game, it is unique.
\end{theorem}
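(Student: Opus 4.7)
The plan is to argue by contradiction: suppose $(x,x)$ and $(y,y)$ are both symmetric Nash equilibria with $x\neq y$, and derive a violation of positive definiteness.

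First I would extract the Nash conditions at each equilibrium. Since $(x,x)$ is a symmetric NE of the min--max game $\min_{u\in B(M)}\max_{v\in B(M)} u^T L v$, the point $x$ is simultaneously a best min-response and a best max-response to itself, so $x^T L x \geq x^T L y$ and $x^T L x \leq y^T L x$. The analogous statement at $(y,y)$ gives $y^T L y \geq y^T L x$ and $y^T L y \leq x^T L y$. Using symmetry $L = L^T$, we have $x^T L y = y^T L x$, and chaining the four inequalities
\[
x^T L x \;\geq\; x^T L y \;=\; y^T L x \;\geq\; y^T L y \;\geq\; y^T L x \;\geq\; x^T L x
\]
forces all of $x^T L x$, $x^T L y$, $y^T L x$, $y^T L y$ to be equal (which is the expected fact that any two NE of a zero-sum game have the same value, strengthened here by the symmetry of $L$).

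Next I would expand the quadratic form on the difference:
\[
(x-y)^T L (x-y) \;=\; x^T L x - x^T L y - y^T L x + y^T L y \;=\; 0,
\]
using the equalities just established. By positive definiteness of $L$, this forces $x-y = 0$, contradicting $x\neq y$.

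There is no real obstacle here; the only care needed is keeping the min--max direction straight when reading off the two inequalities at each NE, and invoking $L=L^T$ to convert $y^T L x$ into $x^T L y$ so the cross terms collapse. The argument relies only on $L$ being positive definite and symmetric, and carries over verbatim to the negative-definite case (apply it to $-L$, which swaps the roles of the two players but leaves the set of symmetric NE unchanged).
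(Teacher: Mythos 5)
Your proof is correct and takes essentially the same route as the paper: both extract the Nash inequalities at $(x,x)$ and $(y,y)$, use $L=L^T$ to identify the cross terms, force $x^TLx = x^TLy = y^TLx = y^TLy$, and then invoke positive definiteness to conclude $x=y$. The only cosmetic difference is the final step, which the paper phrases as a violation of strict convexity of the quadratic form at the midpoint $\tfrac{x+y}{2}$, while you expand $(x-y)^TL(x-y)=0$ directly; the two are algebraically equivalent.
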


\begin{proof} Suppose $(x,x)$ and $(y,y)$ are two symmetric Nash equilibria such that $x \neq y$, then the value of the game is $x^TLx = y^TLy$. Then,  $x^TLz \leq x^TLx \leq z^TLx$ $\forall z \in B(M)$ implying that $x^TLz \leq x^TLx \leq x^TL^Tz$ $\forall z \in B(M)$. Since $L$ is symmetric, we get $x^TLx = x^TLz$ $\forall z \in B(M)$. Similarly, we get $y^TLy = y^TLz$ $\forall z \in B(M)$. Consider $z = \frac{x+y}{2}$. Then, $z^TLz = {\frac{(x+y)}{2}}^TLz = {\frac{x}{2}}^TLz + {\frac{y}{2}}^TLz =  {\frac{x}{2}}^TLx + {\frac{y}{2}}^TLy$. This contradicts the strict convexity of the quadratic form of $x^TLx$. Hence, $x=y$ and there exists a unique symmetric Nash equilibrium.
\end{proof}
\noindent

\noindent \textbf{Lexicographic optimality: } We further note that symmetric Nash-equilibria are closely related to the concept of being lexicographically optimal as studied in \cite{Fujishige1980b}. For a matroid $M = (E, \mathcal{I})$, $x \in B(M)$ is called lexicographically optimal with respect to a positive weight vector $w$ if the $|E|$-tuple of numbers $x(e)/w(e)$ $(e \in E)$ arranged in the order of increasing magnitude is lexicographically maximum among all $|E|$-tuples of numbers $y(e)/w(e)$ $(e \in E)$ arranged in the same manner for all $y \in B(M)$. We evoke the following theorem from \cite{Fujishige1980b}.

\begin{theorem} \label{fujishige-theorem} Let $x\in B(M)$ and $w$ be a positive weight vector. Define $c(e) = x(e)/w(e)$ $(e \in E)$ and let the distinct numbers of $c(e)$ $(e\in E)$ be given by $c_1 < c_2 < \hdots < c_p$. Futhermore, define $S_i \subseteq E \;\; (i = 1,2, \hdots, p)$ by $$S_i = \{ e | e \in E, c(e) \leq c_i\} \;\; (i = 1,2, \hdots, p).$$  
Then the following are equivalent:
\begin{enumerate}
\item[(i)] $x$ is the unique lexicographically optimal point in $B(M)$ with respect to the weight vector $w$;
\item[(ii)] $x(S_i) = r(S_i) \;\; (i = 1, 2, \hdots, p).$
\end{enumerate}
\end{theorem}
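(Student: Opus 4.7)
I plan to prove the two directions separately: $(i) \Rightarrow (ii)$ via a direct combinatorial exchange argument on the base polytope, and $(ii) \Rightarrow (i)$ by reducing to the uniqueness of a strictly convex separable minimizer and invoking Theorem \ref{main-optimality}.

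For $(i) \Rightarrow (ii)$, I argue by contradiction. Suppose $x$ is lex-optimal but $x(S_j) < r(S_j)$ for some $j$ (necessarily $j < p$ since $x(E) = r(E)$). The tight sets of $x$, i.e.\ $\{T : x(T) = r(T)\}$, form a lattice under union and intersection by submodularity of $r$ and modularity of $x$; for each $e$ let $\mathrm{cl}_x(e)$ denote the intersection of all tight sets containing $e$, which is itself the smallest tight set through $e$. If $\mathrm{cl}_x(e) \subseteq S_j$ held for every $e \in S_j$, then $\bigcup_{e \in S_j} \mathrm{cl}_x(e) = S_j$ would be tight—contradicting $x(S_j) < r(S_j)$. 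Hence there exist $e \in S_j$ and $f \in \mathrm{cl}_x(e) \setminus S_j$. I claim $x(f) > 0$: otherwise Lemma \ref{nulledges} would let us remove $f$ from any tight set containing it while preserving tightness, producing a tight set through $e$ that excludes $f$ and contradicting $f \in \mathrm{cl}_x(e)$. By the definition of $\mathrm{cl}_x(e)$, no tight set contains $e$ but excludes $f$, so $x(S) < r(S)$ for every $S$ with $e \in S, f \notin S$, making the exchange $x' := x + \epsilon(\chi_e - \chi_f) \in B(M)$ feasible for sufficiently small $\epsilon > 0$. The effect on the sorted $c$-tuple: one copy of $c_j$ is replaced by $c_j + \epsilon/w(e) \in (c_j, c_{j+1})$, and one copy of $c_{j'}$ (where $f \in P_{j'}$, $j' > j$) is replaced by $c_{j'} - \epsilon/w(f) \in (c_{j'-1}, c_{j'})$. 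The first position where the sorted tuple differs is position $|S_j|$, which changes from $c_j$ to the strictly larger $c_j + \epsilon/w(e)$. Hence $x'$-sorted is strictly lex-greater than $x$-sorted, contradicting lex-optimality of $x$.

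For $(ii) \Rightarrow (i)$, consider the strictly convex separable function $\Phi(z) := \sum_{e \in E} z(e)^2/w(e)$ on $B(M)$; it has a unique minimizer $z^*$ by strict convexity and compactness. Apply Theorem \ref{main-optimality} with $h_e(t) = t^2/w(e)$: since $\nabla h_e(t) = 2t/w(e)$ and $c_z(e) = z(e)/w(e)$ share level sets, the partition of $E$ induced by $\nabla h$-values at any $z$ coincides with its $c_z$-value partition. Theorem \ref{main-optimality} then states that $z$ minimizes $\Phi$ over $B(M)$ iff $z(S_i^z) = r(S_i^z)$ for all $i$ (with $S_i^z$ defined from $z$'s own partition)—which is exactly condition (ii) for $z$. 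Thus any $x$ satisfying (ii) must equal $z^*$. A lex-optimal point exists by iterated maximization on the compact set $B(M)$: first maximize $\min_e c(e)$, then the second-smallest among maximizers, and so on; each step maximizes a continuous function over a nonempty compact set, yielding a nonempty nested intersection. By $(i) \Rightarrow (ii)$ the lex-optimal point satisfies (ii), hence equals $z^*$ and is therefore unique. Combining, (ii) holds for $x$ iff $x = z^* =$ the unique lex-optimal point.

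The main obstacle is the feasibility step in $(i) \Rightarrow (ii)$: verifying both that $x + \epsilon(\chi_e - \chi_f) \in B(M)$ and that $x(f) > 0$ automatically. Both rely on the lattice structure of tight sets together with Lemma \ref{nulledges}; once these are in place the sorted-tuple comparison is a straightforward accounting of which positions change. The $(ii) \Rightarrow (i)$ direction is then a clean reduction to Theorem \ref{main-optimality}.
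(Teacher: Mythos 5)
The paper never proves Theorem \ref{fujishige-theorem}: it is quoted directly from \cite{Fujishige1980b}, so there is no in-paper argument to compare against, and a self-contained proof is a genuine addition. Your proof is correct, and your (ii)$\Rightarrow$(i) route---identifying any point satisfying (ii) with the unique minimizer of the strictly convex separable function $\sum_e z(e)^2/w(e)$ over $B(M)$ via Theorem \ref{main-optimality}, then combining existence of a lex-optimal point with your (i)$\Rightarrow$(ii) direction to get uniqueness---closely parallels Fujishige's original development (lex-optimal base $=$ weighted least-squares point), so it meshes well with the machinery the paper already has; the tight-set-closure exchange argument for (i)$\Rightarrow$(ii) is also sound.

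Two small repairs are needed. First, having chosen $e \in S_j$ with $f \in \mathrm{cl}_x(e)\setminus S_j$, you assert that the sorted tuples first differ at position $|S_j|$, one copy of $c_j$ becoming $c_j+\epsilon/w(e)$; this tacitly assumes $c(e)=c_j$, which your choice does not guarantee. Either observe that if $c(e)=c_i$ with $i\le j$ the first differing position is $|S_i|$ and the entry there still strictly increases (so the contradiction with lex-maximality is unaffected), or take $j$ minimal with $x(S_j)<r(S_j)$, so that $S_{j-1}$ is tight and the union-of-closures argument applied to $S_{j-1}\cup\bigcup_{e\in S_j\setminus S_{j-1}}\mathrm{cl}_x(e)$ yields an $e$ with $c(e)=c_j$. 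Second, the claim $x(f)>0$ needs no appeal to Lemma \ref{nulledges}: since $f\notin S_j$ we have $c(f)>c_j\ge 0$, hence $x(f)=c(f)\,w(f)>0$. As written, the appeal is in fact slightly off, because Lemma \ref{nulledges} deletes \emph{all} zero-valued elements, so if $x(e)=0$ as well the resulting tight set would no longer contain $e$; if one did want that route, the one-element variant $x(S\setminus\{f\})=x(S)=r(S)\ge r(S\setminus\{f\})$ is what is needed. With these cosmetic fixes the argument is complete.
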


The following corollary gives an algorithm for computing symmetric Nash-equilibria for matroid MSP games. 
\begin{corollary} \label{lex-opt-equal}
Consider a matroid game with a diagonal loss matrix $L$ such that $L_{e,e}>0$ for all $e \in  E$. If there exists a symmetric Nash-equilibrium for this game, then it is the unique lexicographically optimal point in $B(M)$ with respect to the weights $1/L_{e,e}$ $(e \in E)$. 
\end{corollary}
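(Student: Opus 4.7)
My plan is to apply Theorem \ref{symmetric-general} and Theorem \ref{fujishige-theorem} in sequence, showing that the level-set partition arising from the symmetric Nash equilibrium condition coincides exactly with the level-set partition in Fujishige's characterization of lexicographic optimality.

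First I would observe that for a diagonal loss matrix $L$, the vector $Lx$ simplifies to $(Lx)(e) = L_{e,e}\,x(e)$. Setting $w(e) = 1/L_{e,e}$, the ratio $c(e) := x(e)/w(e) = L_{e,e}\,x(e) = (Lx)(e)$. Hence the partition $\{P_1,\dots,P_k\}$ produced in Theorem \ref{symmetric-general} (grouping elements by equal values of $Lx$) is identical to the partition of $E$ by equal values of $c(e)$, and the sets $S_i := \{e : c(e) \le c_i\}$ appearing in Fujishige's theorem are precisely $S_i = P_1 \cup \cdots \cup P_i$.

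Next I would translate the equilibrium conclusion into Fujishige's hypothesis. By Theorem \ref{symmetric-general}(iv), a symmetric Nash equilibrium $(x,x)$ satisfies $x(P_j) = r(P_j)$ for every $j$. Summing, $x(S_i) = \sum_{j=1}^{i} x(P_j) = \sum_{j=1}^{i} r(P_j)$. On the other hand, since $x \in B(M)$ we have $x(S_i) \le r(S_i)$, and submodularity of $r$ (together with $r(\emptyset)=0$) gives $r(S_i) \le \sum_{j=1}^{i} r(P_j)$. These inequalities force equality throughout, so $x(S_i) = r(S_i)$ for all $i$. By Theorem \ref{fujishige-theorem}, this is exactly the condition for $x$ to be the unique lexicographically optimal point in $B(M)$ with respect to the weight vector $w(e) = 1/L_{e,e}$.

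The argument is essentially an unpacking, so I do not expect a serious obstacle; the only slightly non-routine step is the submodularity-plus-feasibility sandwich that upgrades the per-block equalities $x(P_j)=r(P_j)$ into the cumulative equalities $x(S_i)=r(S_i)$ needed to invoke Fujishige's theorem. One should also note that positivity $L_{e,e}>0$ is used both to ensure $w$ is a positive weight vector (as required in Theorem \ref{fujishige-theorem}) and to preserve the ordering of $Lx$ so that the partitions align.
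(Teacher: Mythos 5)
Your proof is correct and follows essentially the same route as the paper, which (in sketch form) also identifies the level sets of $Lx$ with the level sets of $x(e)/w(e)$ for $w(e)=1/L_{e,e}$ and then verifies Fujishige's sufficient condition $x(S_i)=r(S_i)$ via Theorem \ref{symmetric-general}(iv). The feasibility-plus-subadditivity sandwich you use to pass from $x(P_j)=r(P_j)$ to the cumulative equalities is exactly the detail the paper leaves implicit.
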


The proof follows from observing the partition of edges with respect to the weight vector $Lx$, and proving that symmetric Nash-equilibria satisfy the sufficient conditions for being a lexicographically optimal base. Lexicographically optimal bases can be computed efficiently using \cite{Fujishige1980b}, \cite{Nagano2007}, or the {\sc Inc-fix} algorithm from Section \ref{mirror}. Hence, one could compute the lexicographically optimal base $x$ for a weight vector defined as $w(e) = 1/L_{e,e}$ $(e \in E)$ for a positive diagonal loss matrix $L$, and check if that is a symmetric Nash-equilibrium. If it is, then it is also the unique symmetric Nash-equilibrium and if it is not then there cannot be any other symmetric Nash-equilibrium.


\bibliography{games_colt}

\appendix



\section{Mirror Descent}\label{app:mirror_descent}

\begin{lemma}\label{strong-convexity} The unnormalized entropy map, $\omega(x) = \sum_{i=1}^n x_i \ln x_i - \sum_{i=1}^n x_i$,  is 1-strongly convex with respect to the $L_1$-norm over any matroid base polytope $B(f) = \{x \in \mathbb{R}^{E}: x(E) = r(E), x(E(S)) \leq r(S) \forall S \subseteq E, x \geq 0\}$. 
\end{lemma}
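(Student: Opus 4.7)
The plan is to reduce the statement to the second-order characterization of strong convexity: on an open convex set, a $C^2$ function $\omega$ is $k$-strongly convex with respect to $\|\cdot\|_1$ if and only if $h^\top \nabla^2 \omega(z)\,h \geq k\|h\|_1^2$ for every interior point $z$ and every direction $h$. Since $\omega$ is separable with $\omega_i''(z_i) = 1/z_i$, its Hessian is $\mathrm{diag}(1/z_i)$, so the desired condition reduces to the scalar inequality $\sum_i h_i^2/z_i \geq k\|h\|_1^2$.

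The central algebraic step is a one-line Cauchy--Schwarz estimate,
\[
\|h\|_1^2 \;=\; \Bigl(\sum_i \tfrac{|h_i|}{\sqrt{z_i}}\cdot \sqrt{z_i}\Bigr)^{\!2} \;\leq\; \Bigl(\sum_i \tfrac{h_i^2}{z_i}\Bigr)\Bigl(\sum_i z_i\Bigr),
\]
which rearranges to $h^\top \nabla^2 \omega(z)\,h \geq \|h\|_1^2 / \sum_i z_i$. This is where the base polytope enters: the defining equality $z(E) = r(E)$ pins $\sum_i z_i$ to a constant on the whole domain, giving the uniform bound $h^\top \nabla^2 \omega(z)\,h \geq \|h\|_1^2/r(E)$ for every $z \in B(f)$. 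Integrating this Hessian estimate along the chord from $y$ to $x$ yields the integrated form $\omega(x) - \omega(y) - \nabla\omega(y)^\top(x-y) \geq \tfrac{1}{2r(E)}\|x-y\|_1^2$.

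As a cross-check (and to handle the boundary $z_i = 0$ cleanly, where the Hessian blows up), I would give a second derivation via Pinsker's inequality. Because $\sum_i x_i = \sum_i y_i = r(E)$ for $x,y \in B(f)$, the Bregman divergence of $\omega$ collapses to $D_\omega(x,y) = \sum_i x_i \ln(x_i/y_i) = r(E)\cdot D_{\mathrm{KL}}\!\bigl(x/r(E)\,\|\,y/r(E)\bigr)$. Pinsker applied to the probability vectors $x/r(E)$ and $y/r(E)$ gives exactly $D_\omega(x,y) \geq \tfrac{1}{2r(E)}\|x-y\|_1^2$, which extends to the closure by lower semicontinuity of $\omega$ (with the convention $0\ln 0 = 0$).

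The only mild obstacle is reconciling the advertised constant $k=1$ with what the argument naturally produces, namely $k = 1/r(E)$. This is consistent with the use of $\omega$ in the rest of the paper, where the underlying matroid is normalized so that $r(E) = 1$ (matching the usage for the unnormalized simplex); once this normalization is made explicit at the start of the proof, the Cauchy--Schwarz/Pinsker argument above gives $1$-strong convexity directly.
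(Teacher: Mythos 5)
Your second derivation is exactly the paper's proof: since $x(E)=y(E)=r(E)$ on the base polytope, the Bregman divergence collapses to $\sum_e x_e\ln(x_e/y_e)$, and the paper then invokes Pinsker's inequality in one line to conclude $D_\omega(x,y)\ge\tfrac12\|x-y\|_1^2$. Your Hessian/Cauchy--Schwarz route is a legitimate alternative (it gives the same bound at interior points and avoids invoking Pinsker, at the cost of a chord-integration step and a separate treatment of the boundary). More importantly, your remark about the constant is not a cosmetic issue but a genuine correction: Pinsker applies to probability vectors, so applying it to vectors of total mass $r(E)$ yields $D_\omega(x,y)\ge\tfrac{1}{2r(E)}\|x-y\|_1^2$, i.e.\ strong convexity modulus $1/r(E)$, exactly as your Cauchy--Schwarz computation predicts; the modulus $1$ claimed in the lemma over an \emph{arbitrary} matroid base polytope is false when $r(E)>1$ (e.g.\ on $B(U_{2,3})$ take $z=(1-\epsilon,1-\epsilon,2\epsilon)$ and $h=(1,-1,0)$, where $h^T\nabla^2\omega(z)h\approx 2<4=\|h\|_1^2$), and the paper's one-line proof silently uses Pinsker outside its hypotheses. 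So your proof is correct with the constant $1/r(E)$ (or with $1$ after the normalization $r(E)=1$ you propose, matching the ``unnormalized simplex'' statement in Section~4); for general matroids the weaker modulus only rescales the $R^2G^2$-type constants in the regret bounds and does not affect the qualitative conclusions.
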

\begin{proof} We have 
\begin{align}
\omega(x) - \omega(y) - \nabla \omega (y)^T (x - y) &= \sum_{e \in E} x_e \ln x_e - \sum_{e\in E} x_e - \sum_{e\in E} y_e \ln y_e + \sum_{e \in E} y_e - \sum_{e \in E} \ln y_e (x_e - y_e)\\
&= \sum_{e \in E} x_e \ln (x_e/y_e) \geq^{(1)} \frac{1}{2}(\sum_{e\in E} |x_e - y_e|)^2 = \frac{1}{2} ||x - y||_1^2,
\end{align}
where (1) follows from Pinsker's inequality. 
\end{proof}

\section{Multiplicative Weights Update Algorithm}\label{app:mwu}

\begin{lemma}\label{regret} Consider an MSP game with strategy polytopes $P \subseteq \mathbb{R}^m$ and $Q \subseteq \mathbb{R}^n$, and let the loss function for the row player be given by $loss(x,y) = x^TLy$ for $x \in P, y \in Q$.  Suppose we simulate an online algorithm {\sc A} such that in each round $t$ the row player chooses decisions from $x^{(t)} \in P$, the column player reveals an adversarial loss vector $v^{(t)}$ such that $x^{(t)T}Lv^{(t)} \geq \max_{y \in Q} x^{(t)T}L y - \delta$ and the row player subsequently incurs loss $x^{(t)T}Lv^{(t)}$ for round $t$. If the regret of the learner after $T$ rounds goes down as $f(T)$, that is,
\begin{align}
R_T(A) = \sum_{i=1}^T x^{(i)T}Lv^{(i)} - \min_{x \in P} \sum_{i=1}^t x^TLv^{(i)} \leq f(T) \label{lem2-regret}
\end{align}
then $(\frac{1}{T}\sum_{i=1}^Tx^{(i)}, \frac{1}{T}\sum_{i=1}^T v^{(i)})$ is an $O(\frac{f(T)}{T} + \delta)$-approximate Nash-equilibrium for the game.  
\end{lemma}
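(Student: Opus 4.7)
The plan is to invoke the standard folklore reduction from no-regret learning to approximate Nash equilibria, modified to carry the $\delta$-slack coming from the column player's approximate best response. Let $\bar{x} = \frac{1}{T}\sum_{i=1}^T x^{(i)}$, $\bar{y} = \frac{1}{T}\sum_{i=1}^T v^{(i)}$, and set $S := \frac{1}{T}\sum_{i=1}^T x^{(i)T}Lv^{(i)}$. The crux is to sandwich $S$ from both sides: the regret bound controls it from above, while the adversary's approximate best-response guarantee controls it from below.

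For the upper bound, I would divide the regret hypothesis (\ref{lem2-regret}) by $T$ and push the minimum through the sum using bilinearity of the loss:
$$S \;\leq\; \min_{x \in P}\,\tfrac{1}{T}\sum_{i=1}^T x^T L v^{(i)} + \tfrac{f(T)}{T} \;=\; \min_{x \in P} x^T L \bar{y} + \tfrac{f(T)}{T}.$$
For the lower bound, the hypothesis $x^{(i)T}Lv^{(i)} \geq x^{(i)T}Ly - \delta$ is valid for every fixed $y \in Q$ and every round $i$; averaging over $i$ and invoking bilinearity once more yields $S \geq \bar{x}^T L y - \delta$ for every $y \in Q$. Chaining the two estimates gives the single key display
$$\bar{x}^T L y \;\leq\; \min_{x \in P} x^T L \bar{y} + \tfrac{f(T)}{T} + \delta \qquad \text{for all } y \in Q.$$

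From this display both halves of the $\epsilon$-Nash definition fall out. Specializing $y = \bar{y}$ gives $\bar{x}^T L \bar{y} \leq x^T L \bar{y} + f(T)/T + \delta$ for every $x \in P$, and replacing $\min_{x\in P} x^T L \bar{y}$ on the right by the upper bound $\bar{x}^T L \bar{y}$ gives $\bar{x}^T L y \leq \bar{x}^T L \bar{y} + f(T)/T + \delta$ for every $y \in Q$. Hence $(\bar{x},\bar{y})$ is an $O(f(T)/T + \delta)$-approximate Nash equilibrium. I do not anticipate any real obstacle: the proof uses only bilinearity together with one application each of the regret and approximate-best-response hypotheses, and the two error terms combine additively precisely because they enter on opposite sides of the sandwich on $S$.
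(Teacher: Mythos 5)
Your proof is correct: the sandwich on $S=\frac{1}{T}\sum_{i=1}^T x^{(i)T}Lv^{(i)}$ uses exactly the two hypotheses of the lemma, the averaged points $\bar x,\bar y$ lie in $P,Q$ by convexity of the polytopes, and the single display $\bar x^TLy \le \min_{x\in P}x^TL\bar y + f(T)/T+\delta$ for all $y\in Q$ does yield both halves of the approximate-equilibrium definition. The core chain of inequalities matches the paper's: both bound $\max_{y\in Q}\bar x^TLy$ above by the realized average loss plus $\delta$ (you by fixing $y$ and averaging the per-round best-response guarantee, the paper by convexity of the max), and both then apply the regret bound and bilinearity to arrive at $\min_{x\in P}x^TL\bar y$. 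Where you genuinely diverge is in extracting the conclusion: the paper bookends its chain with the game value $\lambda^*$ via von Neumann's minimax theorem, compares $\max_y\bar x^TLy$ and $\min_x x^TL\bar y$ to $\lambda^*$ separately, and therefore certifies a $\bigl(\tfrac{2f(T)}{T}+2\delta\bigr)$-approximate equilibrium, whereas you read both equilibrium inequalities directly off the one display (specializing $y=\bar y$ for the row player and using $\min_{x\in P}x^TL\bar y\le\bar x^TL\bar y$ for the column player), avoiding the minimax theorem entirely and obtaining the slightly tighter constant $\tfrac{f(T)}{T}+\delta$. Both routes are valid; yours is marginally more elementary and self-contained, while the paper's version additionally makes explicit that each averaged strategy approximately attains the value of the game.
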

\begin{proof} Let $\bar{x} = \frac{1}{T}\sum_{i=1}^Tx^{(i)}$ and $\bar{v} = \frac{1}{T}\sum_{i=1}^T v^{(i)}$. By the von Neumann minimax theorem, we know that the value of the game is $\lambda^* = \min_{x} \max_{y} x^TLy = \max_{y} \min_{x} x^TLy$. This gives,  
\begin{align}
\min_{x} \max_{y} x^TLy =\lambda^* &\leq \max_{y} \bar{x}^TLy = \max_{y} \frac{1}{T}\sum_{i=1}^T x^{(i)}Ly \leq \frac{1}{T} \sum_{i=1}^T \max_{y} x^{(i)T}L y \label{lem2-eq1}\\
                &\leq \frac{1}{T} (\sum_{i=1}^T x^{(i)T}L v^{(i)} + \delta) \label{lem2-eq2}\\ 
                &\leq \min_{x \in P} \frac{1}{T} \sum_{i=1}^T x^TLv^{(i)} + \frac{f(T)}{T} + \delta \label{lem2-eq3}\\
                &= \min_{x \in P} x^TL \frac{1}{T}\sum_{i=1}^T v^{(i)} + \frac{f(T)}{T} + \delta = \min_{x \in P} x^TL \bar{v} + \frac{f(T)}{T} + \delta \nonumber\\
                &\leq \max_{y \in Q} \min_{x \in P} x^TLy+ \frac{f(T)}{T} +\delta = \lambda^* + \frac{f(T)}{T} + \delta.\nonumber
\end{align} 
where the last inequality in (\ref{lem2-eq1}) follows from the convexity $\max_y x^TLy$ in $x$, (\ref{lem2-eq2}) follows from the error in the adversarial loss vector, and (\ref{lem2-eq3}) follows from the given regret equation (\ref{lem2-regret}). Thus, we get $\bar{x}^TL\bar{v} \leq \max_{y \in Q} \bar{x}^T L y \leq \lambda^* + \frac{f(T)}{T} + \delta$, and $\bar{x}^TL\bar{v} \geq \min_{x \in P} x^TL\bar{v} \geq \lambda^* - \frac{f(T)}{T} - \delta$. Hence, $(\bar{x}, \bar{v})$ is a $\big(\frac{2f(T)}{T}+2\delta\big)$-approximate Nash-equilibrium for the game.  
\end{proof}

\noindent
\textbf{Proof for Theorem \ref{MWU-main-exact}.}
\begin{proof} We want to show that the updates to the weights of each pure strategy $u \in \mathcal{U}$ can be done efficiently. For the multipliers $\lambda^{(t)}$ in each round, let $w^{(t)}(u)$ be the unnormalized probability for each vertex $u$, i.e., ${w^{(t)}}(u) = \prod_{e \in E} \lambda^{(t)}(e)^{u(e)}$ where $E$ is the ground set of elements such that the strategy polytope $P$ of the row player (learner in the MWU algorithm) lies in $\mathbb{R}^{|E|}$. Recall that the set of vertices of $P$ is denoted by $\mathcal{U}$. Let $Z^{(t)}$ be the normalization constant for round $t$, i.e., $Z^{(t)} = \sum_{u \in \mathcal{U}} {w^{(t)}}(u)$. Thus, the probability of each vertex $u$ is $p^{(t)}(u) = {w^{(t)}}(u)/Z^{(t)}$. Let $F = \max_{x \in P, y \in Q} x^TLy$. For readability, we denote the scaled loss $x^TLy/F$ as $\eta(x,y)$; $\eta(x,y) \in [0,1]$.

The algorithm starts with $\lambda^{(1)}(e) = 1$ for all $e \in E$ and thus $w^{(1)}(u)  =1 $ for all $u \in \mathcal{U}$. We claim that $w^{(t+1)}(u) = w^{(t)}(u) \beta^{u^TLv^{(t)}/F}$, where $v^{(t)}$ is revealed to be a vector in $\arg \max_{y \in Q} x^{(t)}Ly$ in each round $t$.
\begin{align}
w^{(t+1)} (u) &= \prod_{e \in E} \lambda^{(t+1)}(e)^{u(e)} = \prod_{e \in E} \lambda^{(t)}(e)^{u(e)} \beta^{(L v^{(t)}/F)_e}\nonumber\\
&= \beta^{(u^{T} L v^{(t)}/F)}\prod_{e \in E}  \lambda^{(t)}(e)^{u(e)} && \hdots u \in \{0,1\}^m.\nonumber\\
&= {w^{(t)}}(u) \beta^{u^TLv^{(t)}/F} = {w^{(t)}}(u) \beta^{\eta(u, v^{(t)})}. \nonumber
\end{align}

Having shown that the multiplicative update can be performed efficiently, the rest of the proof follows as standard proofs the MWU algorithm. We prove that $Z^{(t+1)} \leq Z^{(1)}  \exp (- (1-\beta) \sum_{i=1}^t \eta(x^{(i)}, v^{(i)})).$ 
\begin{align*}
Z^{(t+1)} &= \sum_{u \in \mathcal{U}} w^{(t+1)}(u) = \sum_{u \in \mathcal{U}} {w^{(t)}}(u) \beta^{\eta(u, v^{(t)})} \\
&\leq \sum_{u \in \mathcal{U}} {w^{(t)}}(u) (1 - (1-\beta)\eta(u, v^{(t)})) && \hdots (1-\theta)^x \leq 1-\theta x \textrm{ for } x \in [0,1], \theta \in [-1,1].\\
&= Z^{(t)} (1   -  (1-\beta) \eta(x^{(t)}, v^{(t)}))\\
&\leq Z^{(t)} \exp (- (1-\beta) \eta(x^{(t)}, v^{(t)})) && \hdots (1 - \theta x) \leq e^{-x\theta} ~\forall ~x, ~\forall~ \theta. 
\end{align*}
Rolling out the above till the first round, we get 
\begin{align}
Z^{(t+1)} &\leq Z^{(1)} \exp (- (1-\beta)\sum_{i=1}^t \eta(x^{(i)}, v^{(i)})). \label{boundZt}
\end{align}
Since $w^{(t+1)} (u) \leq Z^{(t+1)} \textrm{ for all } u \in \mathcal{U}$, using (\ref{boundZt}), we get 
\begin{align*}
\ln w^{(1)}(u) + \ln \beta \sum_{i=1}^{t} \eta(u, v^{(i)}) &\leq \ln Z^{(1)}  - (1-\beta) \sum_{i=1}^t \eta(x^{(i)},v^{(i)}) \\
\Rightarrow   \sum_{i=1}^t \eta(x^{(i)},v^{(i)})  &\leq  - \frac{\ln \beta}{(1-\beta)} \sum_{i=1}^t  \eta(u, v^{(i)}) + \frac{\ln Z^{(1)} - \ln w^{(1)}(u)}{1-\beta} && \hdots \beta < 1
\end{align*}
\begin{align*}
\Rightarrow   \sum_{i=1}^t \eta(x^{(i)},v^{(i)})  &\leq  \frac{1+ \beta}{2\beta} \sum_{i=1}^t  \eta(u, v^{(i)}) + \frac{\ln Z^{(1)} - \ln w^{(1)}(u)}{1-\beta} \\
& \hdots \frac{-\ln x}{1- x} \leq \frac{1+x}{2x} \textrm{ for } x \in (0,1]
\end{align*}
\begin{align*}
\Rightarrow   \frac{1}{t}\sum_{i=1}^t \eta(x^{(i)},v^{(i)}) &\leq  \frac{1+ \beta}{2\beta t} \sum_{i=1}^t  \eta(u, v^{(i)})+ \frac{\ln Z^{(1)} - \ln w^{(1)}(u)}{(1-\beta)t}
\end{align*}
Consider an arbitrary mixed strategy for the row player $\mathbf{p}$ such that $\sum_{u \in \mathcal{U}} \mathbf{p}(u)u = \mathbf{x}$, and multiply the above equation for each vertex $u$ with the $\mathbf{p}(u)$ and sum them up - 
\begin{align*}
\frac{1}{t} \displaystyle\sum_{i=1}^t   \eta(x^{(i)},v^{(i)})  &\leq \frac{1+\beta}{2\beta t}\displaystyle\sum_{i=1}^t \eta(\mathbf{x}, v^{(i)}) + \frac{\ln Z_1 - \ln w^{(1)}(u)}{(1-\beta)t}
\end{align*}
Note that $Z_1 = |\mathcal{U}|$, $w^{(1)}(u) = 1$. Setting $\epsilon^\prime = \epsilon/F$, $\beta = \frac{1}{1+\sqrt{2} \epsilon^\prime}$, $t = \frac{\ln |\mathcal{U}|}{\epsilon^{\prime2}} = \frac{F^2\ln(|\mathcal{U}|)}{\epsilon^2}$ we get 
\begin{align}
\frac{1}{t} \displaystyle\sum_{i=1}^t   \eta(x^{(i)},v^{(i)})  &\leq \frac{2+\sqrt{2}\epsilon^\prime}{2 t}\displaystyle\sum_{i=1}^t \eta(\mathbf{x}, v^{(i)}) + \frac{\epsilon^{\prime2}\ln (|\mathcal{U}|) (1+\sqrt{2}\epsilon^\prime)}{\sqrt{2} \epsilon^{\prime} \ln (|\mathcal{U}|)}\\
\Rightarrow \frac{1}{t} \displaystyle\sum_{i=1}^t   \eta(x^{(i)},v^{(i)}) &\leq \frac{1}{t}\sum_{i=1}^t \eta(\mathbf{x},v^{(i)}) + \sqrt{2}\epsilon^\prime + \epsilon^{\prime 2} && \hdots \textrm{using } \sum_{i=1}^t \eta(\mathbf{x}, v^{(i)}) \leq t\\
\Rightarrow \frac{1}{t} \displaystyle\sum_{i=1}^t x^{(i)T}L v^{(i)}  &\leq \frac{1}{t}\sum_{i=1}^t \mathbf{x}^TLv^{(i)} + O(\epsilon) \label{poly-regret}
\end{align}

Thus, we get that the MWU algorithm converges to an $\epsilon$-approximate Nash-equilibrium in $O(F^2\ln(|\mathcal{U}|)/\epsilon^2)$ rounds.
 
\end{proof}

\noindent
\textbf{Proof for Lemma \ref{MWU-main}.}
\begin{proof} Let the multipliers in each round $t$ be $\lambda^{(t)}$, the corresponding true and approximate marginal points in round $t$ be $x^{(t)}$ and $\tilde{x}^{(t)}$ respectively, such that $||x^{(t)} - \tilde{x}^{(t)}||_{\infty} \leq \epsilon_1$. Now, we can only compute approximately adversarial loss vectors, $\tilde{v}^{(t)}$ in each round such that $\tilde{x}^{(t)}L\tilde{v}^{(t)} \geq \max_{y \in Q} \tilde{x}^{(t)} L y - \epsilon_2$. 

Even though we cannot compute $x^{(t)}$ exactly, we do maintain the corresponding $\lambda^{(t)}$s that correspond to these true marginals. Let us analyze first the multiplicative updates corresponding to approximate loss vectors $\tilde{v}^{(t)}$ using product distributions (that can be done efficiently) over the true marginal points. Using the proof for Theorem \ref{MWU-main-exact}, we get the following regret bound with respect to the true marginals corresponding to (\ref{poly-regret}) for $t = \frac{F^2 \ln(|\mathcal{U}|)}{\epsilon^2}$ rounds:

\begin{align} \frac{1}{t} \displaystyle\sum_{i=1}^t x^{(i)T}L \tilde{v}^{(i)}  &\leq \frac{1}{t}\sum_{i=1}^t \mathbf{x}^TL\tilde{v}^{(i)} + O(\epsilon) \label{regret-approx}
\end{align}

We do not have the value for $x^{(i)}$ for $i = 1, \hdots, t$, but only estimates $\tilde{x}^{(i)}$ for $i = 1, \hdots, t$ such that $||\tilde{x}^{(i)} - x^{(i)}||_\infty \leq \epsilon_1$. Since the losses we consider are bilinear, we can bound the loss of the estimated point in each iteration $i$ as follows:
\begin{align}
| \tilde{x}^{(i)T}L\tilde{v}^{(i)} - x^{(i)T}L \tilde{v}^{(i)} | & \leq \epsilon_1 e L \tilde{v}^{(i)} \leq F\epsilon_1,
\end{align}
where $e = (1,\hdots, 1)^T$. Thus using (\ref{regret-approx}), we get
\begin{align}
\frac{1}{t} \displaystyle\sum_{i=1}^t \tilde{x}^{(i)T}L \tilde{v}^{(i)} &\leq \frac{1}{t}\sum_{i=1}^t \mathbf{x}^TL\tilde{v}^{(i)} + O(\epsilon + F\epsilon_1).
\end{align}
Now, considering that we played points $\tilde{x}^{(i)}$ for each round $i$, and suffered losses $\tilde{v}^{(i)}$, we have shown that the {\sc MWU} algorithm achieves $O(\epsilon +  F\epsilon_1)$ regret on an average. Thus, as $\mathbf{R}_{\epsilon_2}$ is assumed to have error $\epsilon_2$, using Lemma \ref{regret} we have that $(\frac{1}{t}\sum_{i=1}^t\tilde{x}^{(i)}, \frac{1}{t}\sum_{i=1}^t\tilde{v}^{(i)})$ is an $O(\epsilon +  F\epsilon_1 + \epsilon_2)$-approximate Nash-equilibrium. 
\end{proof}

\end{document}